\newtheorem{theorem}{Theorem}
\newtheorem{proposition}[theorem]{Proposition}%
\newtheorem{lemma}{Lemma}
\newtheorem{definition}{Definition}%
\begin{document}

\title[Article Title]{Towards Understanding the Optimization Mechanisms in Deep Learning}


\author*[1]{\fnm{Binchuan} \sur{Qi}}\email{2080068@tongji.edu.cn} 

\author*[1,2,3]{\fnm{Wei} \sur{Gong}}\email{weigong@tongji.edu.cn}

\author[1,2,3]{\fnm{Li} \sur{Li}}\email{lili@tongji.edu.cn}

\affil[1]{\orgdiv{College of Electronics and Information Engineering}, \orgname{Tongji University}, \orgaddress{\city{Shanghai}, \postcode{201804}, \country{China}}}

\affil[2]{\orgdiv{National Key Laboratory of Autonomous Intelligent Unmanned Systems}, \orgname{Tongji University}, \orgaddress{\city{Shanghai}, \postcode{201210}, \country{China}}}

\affil[3]{\orgdiv{Shanghai Research Institute for Intelligent Autonomous Systems}, \orgname{Tongji University}, \orgaddress{\city{Shanghai}, \postcode{201210}, \country{China}}}

\abstract{
In this paper, we adopt a probability distribution estimation perspective to explore the optimization mechanisms of supervised classification using deep neural networks. 
We demonstrate that, when employing the Fenchel-Young loss, despite the non-convex nature of the fitting error with respect to the model's parameters, global optimal solutions can be approximated by simultaneously minimizing both the gradient norm and the structural error. 
The former can be controlled through gradient descent algorithms. For the latter, we prove that it can be managed by increasing the number of parameters and ensuring parameter independence, thereby providing theoretical insights into mechanisms such as over-parameterization and random initialization.
Ultimately, the paper validates the key conclusions of the proposed method through empirical results, illustrating its practical effectiveness.
}

\keywords{Deep Learning, Optimization Mechanism, Probability Distribution Estimation.}

\maketitle
\section{Introduction}
\label{sec:intro}
Alongside the remarkable practical achievements of deep learning, the optimization mechanisms regarding deep learning remain unanswered within the classical learning theory framework.
Key insights from the studies \cite{Yun2018SmallNI, Du2018GradientDP, Chizat2018OnLT, Arjevani2022AnnihilationOS} emphasize the pivotal role of over-parameterization in finding the global optimum and enhancing the generalization ability of deep neural networks (DNNs).
Recent work has shown that the evolution of the trainable parameters in continuous-width DNNs during training can be captured by the neural tangent kernel (NTK)~\cite{Jacot2018NeuralTK,Du2018GradientDF,Zou2018StochasticGD,Arora2019FineGrainedAO,mohamadi2023fast,wang2023ntk}. 
An alternative research direction attempts to examine the infinite-width neural network from a mean-field perspective~\citep{Sirignano2018MeanFA,Mei2018AMF,Chizat2018OnTG,nguyen2023rigorous}. 
However, in practical applications, neural networks are of finite width, and under this condition, it remains unclear whether NTK theory and mean-field theory can adequately characterize the convergence properties of neural networks~\cite{DBLP:conf/msml/SeleznovaK21}.
Therefore, the mechanisms of non-convex optimization in deep learning, and the impact of over-parameterization on model training, remain incompletely resolved.

\textbf{Motivation}. A substantial proportion of machine learning tasks can be conceptualized within the domain of probability distribution estimation. Supervised classification and regression tasks are mathematically characterized by the process of learning the conditional probability distribution of labels given input features. Generative learning involves estimating the underlying joint distribution of features. Consequently, adopting a perspective of probability distribution estimation to analyze deep learning is a logical and coherent choice, and it is applicable to various types of learning tasks. One challenge in analyzing the optimization mechanisms in deep learning lies in the diversity of loss functions used, which leads to different characteristics of the optimization objectives. This makes it difficult to handle them within a unified framework. However, \citet{Blondel2019LearningWF} demonstrated that most loss functions currently used in practical applications can be expressed in the form of Fenchel-Young loss. Therefore, using Fenchel-Young loss is a reasonable and effective approach to reduce the complexity of analysis. 

\textbf{Contribution}. The contributions of this paper are stated as follows. 

\begin{enumerate}
\item This paper proves that when using the Fenchel-Young loss, classification problems in machine learning are equivalent to conditional probability distribution estimation given the features. Additionally, this paper demonstrates that the Fenchel-Young loss possesses implicit regularization capabilities, thereby justifying its use as a standalone optimization objective.
\item This paper proves that although the problem of fitting and learning conditional probability distributions constitutes a non-convex optimization problem when using DNNs, its global optimum is equivalent to its stationary points. Specifically, we can approximate the global optimal solution by reducing both the gradient norm and structural error. This conclusion elucidates the non-convex optimization mechanism underlying model training in deep learning.
\item This paper demonstrates that under the assumption of gradient independence, the structural error is controlled by the number of model parameters; that is, the larger the number of parameters, the smaller the corresponding structural error. This conclusion provides theoretical insights into techniques such as over-parameterization, random parameter initialization, and dropout.
\item Key conclusions drawn from this framework are validated through experimental results.
\end{enumerate}

\textbf{Organization}. The paper is organized as follows:Section~\ref{sec:related_work} reviews the research progress and current status. Fundamental concepts are defined in Section~\ref{sec:pre}. The proposed method and conclusions are detailed in Section~\ref{sec:gd_framework}. Experimental settings and results are presented in Section~\ref{sec:experiment}. The conclusions are summarized in Section~\ref{sec:conclusion}. 

\section{Related Work}
\label{sec:related_work}
Despite the inherent non-convexity of the objective functions, empirical evidence indicates that gradient-based methods, such as stochastic gradient descent (SGD) are capable of converging to the global minima in these networks. 

Existing studies \cite{Du2018GradientDP, Chizat2018OnLT, Arjevani2022AnnihilationOS} indicate that over-parameterization plays a pivotal role in finding the global optimum and enhancing the generalization ability of DNNs.
Therefore, analyzing the optimization mechanisms of deep learning from the perspective of model parameter scale has become an important research direction.
The NTK thus has emerged as a pivotal concept, as it captures the dynamics of over-parameterized neural network trained by GD~\cite{Jacot2018NeuralTK}. 
It is already known in the literature that DNNs in the infinite width limit are equivalent to a Gaussian process~\cite{Neal1996PriorsFI,Williams1996ComputingWI,Winther2000ComputingWF,Neal1995BayesianLF,Lee2017DeepNN}.
The work of~\citet{Jacot2018NeuralTK} elucidates that the evolution of the trainable parameters in continuous-width DNNs during training can be captured by the NTK. Some work has shown that with a specialized scaling and random initialization, the parameters of continuous width two-layer DNNs are restricted in an infinitesimal region around the initialization and can be regarded as a linear model with infinite dimensional features~\cite{Du2018GradientDP,Li2018LearningON,Du2018GradientDF,Arora2019FineGrainedAO}. 
Since the system becomes linear, the dynamics of GD within this region can be tracked via properties of the associated NTK and the convergence to the global optima with a linear rate can be proved.
Later, the NTK analysis of global convergence is extended to multi-layer neural nets~\cite{AllenZhu2018LearningAG, Zou2018StochasticGD, AllenZhu2018ACT,mohamadi2023fast,wang2023ntk}. 

An alternative research direction attempts to examine the infinite-width neural network from a mean-field perspective~\citep{Sirignano2018MeanFA,Mei2018AMF, Chizat2018OnTG, nguyen2023rigorous}. The key idea is to characterize the learning dynamics of noisy SGD as the gradient flows over the space of probability distributions of neural network parameters. When the time goes to infinity, the noisy SGD converges to the unique global optimum of the convex objective function for the two-layer neural network with continuous width. 

NTK and mean-field methods provide a fundamental understanding on training dynamics and convergence properties of non-convex optimization in infinite-width neural networks. 
However, the assumption of infinitely wide neural networks does not hold in real-world applications, making it an open question whether NTK and mean field theories can be applied to the analysis of practical neural networks. Some recent research findings \cite{DBLP:conf/msml/SeleznovaK21, vyas2022limitations} indicate that the NTK theory does not generally describe the training dynamics of finite-width DNNs accurately and suggest that an entirely new conceptual viewpoint is required to provide a full theoretical analysis of DNNs' behavior under GD.

Therefore, elucidating the optimization and over-parameterization mechanisms underlying deep learning remains an unresolved issue at present~\cite{Oneto2023DoWR}.

\section{Preliminaries}
\label{sec:pre}
\subsection{Notation}
\begin{enumerate}
    \item Random variables are denoted using upper case letters such as $Z$, $X$, and $Y$, which take values in sets $\mathcal{Z}$, $\mathcal{X}$ and $\mathcal{Y}$, respectively. $\mathcal{X}$ represents the input feature space and $\mathcal{Y}$ denotes a finite set of labels. The cardinality of set $\mathcal{Z}$ is denoted by $|\mathcal{Z}|$. 
    \item We utilize the notation $f_\theta$ (hereinafter abbreviated as $f$) to denote the model characterized by the parameter vector $\theta$. Additionally, $f_{\theta}(x)$ (abbreviated as $f(x)$) represents the model with a parameter vector $\theta$ and a specific input $x$. The space of models, which is a set of functions endowed with some structure, is represented by $\mathcal{F}_{\Theta}=\{f_\theta:\theta\in \Theta\}$, where $\Theta$ denotes the parameter space. Similarly, we define the hypothesis space with feature $x$ as $\mathcal{F}_{\Theta}(x):=\{f_{\theta}(x):\theta\in \Theta\}$.
    \item The Legendre-Fenchel conjugate of a function $\Omega$ is denoted by $\Omega^*(\nu):= \sup_{\mu \in \mathrm{dom}(\Omega)}\langle \mu,\
    \nu \rangle-\Omega(\mu)$~\cite{Todd2003ConvexAA}. By default, $\Omega$ is a continuous strictly convex function, and its gradient with respect to $\mu$ is denoted by $\nabla_\mu \Omega(\mu)$. For convenience, we use $\mu_\Omega^*$ to represent $\nabla_\mu \Omega(\mu)$. When $\Omega(\cdot)=\frac{1}{2}\|\cdot\|_2^2$, we have $\mu=\mu _\Omega^*$. The Fenchel-Young loss $d_\Omega \colon \mathrm{dom}(\Omega) \times \mathrm{dom}(\Omega^*) \to \mathbb{R}_{\ge 0}$ \label{def:FY_loss} generated by $\Omega$ is defined as~\citep{Blondel2019LearningWF}:
\begin{equation}
d_{\Omega}(\mu, \nu) 
:= \Omega(\mu) + \Omega^*(\nu) - \langle \mu,\nu\rangle,
\label{eq:fy_losses}
\end{equation}
where $\Omega^*$ denotes the conjugate of $\Omega$.
\item The maximum and minimum eigenvalues of matrix $A$ are denoted by $\lambda_{\max}(A)$ and $\lambda_{\min}(A)$, respectively.
\item To improve clarity and conciseness, we transform the distribution function into a vector for processing and provide the following definitions of symbols:
\begin{equation}
    \begin{aligned}
        &q_{\mathcal{X}}:=(q_{X}(x_{1}),\cdots, q_{X}(x_{|\mathcal X|}))^\top,\\
        &q_{\mathcal{Y}|x}:=(q_{Y|X}(y_{1}|x),\cdots,q_{Y|X}(y_{|\mathcal Y|}|x))^\top.
    \end{aligned}
\end{equation}
Here, $q_{X}(x)$ and $q_{Y|X}(y|x)$ represent the marginal and conditional PMFs/PDFs, respectively. 
\end{enumerate}

\subsection{Lemmas}
\label{appendix:lem}

The theorems in this paper rely on the following lemmas. 
 
\begin{lemma}[Properties of Fenchel-Young losses]
\label{prop:fy_losses}
The following are the properties of Fenchel-Young losses~\citep{Blondel2019LearningWF}.
\begin{enumerate}
\item $d_{\Omega}(\mu, \nu) \ge 0$ for any $\mu \in
    \mathrm{dom}(\Omega)$ and $\nu \in \mathrm{dom}(\Omega^*)$. If $\Omega$ is a lower semi-continuous proper convex function, then the loss is zero iff $\nu \in \partial \Omega(\mu)$. Furthermore, when $\Omega$ is strictly convex, the loss is zero iff $\nu=\nabla_\mu \Omega(\mu)$.
\item If $\Omega$ is strictly convex, then $d_\Omega(\mu,\nu)$ is differentiable and  $\nabla_\nu d_{\Omega}(\mu,\nu) = \nabla_{\nu} \Omega^*(\nu)-\mu=\nu_{\Omega^*}^*-\mu$. If $\Omega$ is strongly convex, then $d_\Omega(\mu,\nu)$ is smooth, i.e., $\nabla_\nu d_\Omega(\mu,\nu)$ is Lipschitz continuous. 
\end{enumerate}
\end{lemma}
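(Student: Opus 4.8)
The plan is to derive both items from the Fenchel-Young inequality together with standard convex-duality facts. For item 1, I would start from the definition $\Omega^*(\nu) = \sup_{\mu' \in \mathrm{dom}(\Omega)} \langle \mu', \nu \rangle - \Omega(\mu')$ and substitute the particular point $\mu' = \mu$ to obtain $\Omega^*(\nu) \ge \langle \mu, \nu \rangle - \Omega(\mu)$; rearranging gives $d_\Omega(\mu, \nu) = \Omega(\mu) + \Omega^*(\nu) - \langle \mu, \nu \rangle \ge 0$, valid precisely when both $\Omega(\mu)$ and $\Omega^*(\nu)$ are finite, i.e. on the stated domains. For the equality condition I would unwind the definition of the subdifferential: $\nu \in \partial \Omega(\mu)$ says exactly that $\mu$ maximizes $\langle \cdot, \nu \rangle - \Omega(\cdot)$, which is the statement that the supremum defining $\Omega^*(\nu)$ is attained at $\mu$, equivalently $d_\Omega(\mu, \nu) = 0$; the lower semi-continuous proper convex hypothesis guarantees $\Omega^*$ is well-defined and the subdifferential framework applies. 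Finally, when $\Omega$ is strictly convex and (per the paper's default convention) differentiable, the subdifferential collapses to the singleton $\partial \Omega(\mu) = \{\nabla_\mu \Omega(\mu)\}$, so the condition reduces to $\nu = \nabla_\mu \Omega(\mu)$.

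For item 2, I observe that $\Omega(\mu)$ and $\langle \mu, \nu \rangle$ contribute terms that are constant and affine in $\nu$, respectively, so differentiating $d_\Omega(\mu, \nu)$ with respect to $\nu$ reduces to differentiating $\Omega^*$. The key input is the duality correspondence that strict convexity of $\Omega$ forces differentiability of $\Omega^*$ on the interior of its domain; granting this, $d_\Omega$ is differentiable with $\nabla_\nu d_\Omega(\mu, \nu) = \nabla_\nu \Omega^*(\nu) - \mu = \nu_{\Omega^*}^* - \mu$, where the last equality is merely the paper's notation. For the smoothness claim I would invoke the classical conjugacy between strong convexity and Lipschitz-smoothness: if $\Omega$ is $\sigma$-strongly convex, then $\Omega^*$ is $(1/\sigma)$-smooth, i.e. $\nabla \Omega^*$ is $(1/\sigma)$-Lipschitz. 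Since the additive $-\mu$ is constant in $\nu$, this Lipschitz bound transfers verbatim to $\nabla_\nu d_\Omega(\mu, \nu)$, establishing that $d_\Omega$ is smooth.

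The routine parts are the Fenchel-Young inequality and the affine bookkeeping in $\nu$. The main obstacle is justifying the two duality correspondences used above: that strict convexity of $\Omega$ yields a single-valued, differentiable conjugate, and that $\sigma$-strong convexity yields a $(1/\sigma)$-Lipschitz gradient of $\Omega^*$. Neither is elementary: the former rests on the equivalence (due to Rockafellar) between essential strict convexity of $\Omega$ and essential smoothness of $\Omega^*$, while the latter is the standard strong-convexity/smoothness duality. Since the lemma is attributed to \citet{Blondel2019LearningWF}, I would cite these classical results rather than reprove them, and merely verify that the paper's blanket assumptions (continuity, strict or strong convexity, properness) place $\Omega$ within the hypotheses under which the correspondences hold.
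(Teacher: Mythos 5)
Your proposal is correct: the paper does not prove this lemma but imports it verbatim from \citet{Blondel2019LearningWF}, and your argument — the Fenchel--Young inequality for nonnegativity, attainment of the conjugate supremum for the equality case, and the strict-convexity/essential-smoothness and strong-convexity/Lipschitz-smoothness dualities for item 2 — is exactly the standard proof given there. You also correctly flag the one delicate point, namely that reducing $\partial\Omega(\mu)$ to the singleton $\{\nabla_\mu\Omega(\mu)\}$ uses differentiability of $\Omega$, which the paper's blanket assumptions supply.
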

Throughout this paper, in line with practical applications and without affecting the conclusions, we assume that $\Omega$ is a proper, lower semi-continuous (l.s.c.), and strictly convex function or functional. Consequently, the Fenchel-Young loss attains zero if and only if $\mu =\nabla_\nu \Omega^*(\nu)$.
\begin{lemma}
\label{prop:high_dim_2}
    Suppose that we sample $n$ points $x^{(1)}, \ldots, x^{(n)}$ uniformly from the unit ball $\mathcal{B}^{m}_1:\{x\in\mathbb R^m,\|x\|_2\le 1\}$. Then with probability $1-O(1 / n)$ the following holds~\citep[Theorem 2.8]{blum2020foundations}:
\begin{equation}
    \begin{aligned}
                &\left\|x^{(i)}\right\|_2 \geq 1-\frac{2 \log n}{m},\text{ for } i=1, \ldots, n.\\
        & |\langle x^{(i)}, x^{(j)}\rangle| \leq \frac{\sqrt{6 \log n}}{\sqrt{m-1}}  \text { for all } i, j=1, \ldots, i \neq j.   
    \end{aligned}
\end{equation}
\end{lemma}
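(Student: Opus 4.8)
The plan is to treat the two inequalities as separate single-object concentration statements---one per point for the norm bound, one per unordered pair for the inner-product bound---and then to combine everything with a union bound. The constants $2$ and $\sqrt{6}$ are chosen precisely so that the per-object failure probabilities come out as $O(n^{-2})$ and $O(n^{-3})$ respectively, which survive a union bound over the $n$ points and the $\binom{n}{2}=O(n^2)$ pairs, leaving a total failure probability of $O(1/n)$.

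For the first inequality, I would use the elementary volume-ratio identity: for a single uniform point $x$ and any $\epsilon\in(0,1)$,
\[
\Pr\bigl[\|x\|_2 \le 1-\epsilon\bigr] = \frac{\mathrm{Vol}\bigl((1-\epsilon)\mathcal{B}^m_1\bigr)}{\mathrm{Vol}\bigl(\mathcal{B}^m_1\bigr)} = (1-\epsilon)^m \le e^{-\epsilon m}.
\]
Taking $\epsilon = \tfrac{2\log n}{m}$ gives a per-point failure probability of at most $n^{-2}$, so a union bound over the $n$ points bounds the probability that some $\|x^{(i)}\|_2 < 1-\tfrac{2\log n}{m}$ by $n^{-1}=O(1/n)$.

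For the second inequality, I would first reduce the two-vector problem to a one-coordinate problem using rotational invariance of the uniform measure: conditioning on $x^{(j)}$ and rotating so that $x^{(j)}$ points along $e_1$, the quantity $\langle x^{(i)}, x^{(j)}\rangle$ equals $\|x^{(j)}\|_2$ times the first coordinate of the still-uniform point $x^{(i)}$, whence $|\langle x^{(i)}, x^{(j)}\rangle| \le |x^{(i)}_1|$ since $\|x^{(j)}\|_2 \le 1$. It then suffices to control the tail of a single coordinate of a uniform ball point. The marginal density of $x_1$ is proportional to $(1-t^2)^{(m-1)/2}$ on $[-1,1]$, because the slice at height $t$ is an $(m-1)$-ball of radius $\sqrt{1-t^2}$. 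Bounding the numerator tail via $(1-t^2)^{(m-1)/2}\le e^{-(m-1)t^2/2}$ and lower-bounding the normalizing integral yields a Gaussian-type tail of the form $\Pr[|x_1| \ge c/\sqrt{m-1}] \le \tfrac{2}{c}e^{-c^2/2}$; with $c=\sqrt{6\log n}$ this is at most $n^{-3}$ for $n$ large, and a union bound over the $O(n^2)$ pairs again gives $O(1/n)$.

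The main obstacle is the single-coordinate tail estimate in the second part: unlike the clean volume ratio of the first part, it requires identifying the marginal density, handling the normalizing constant carefully, and estimating a Gaussian-type integral to extract both the exact $\sqrt{m-1}$ scaling and the $e^{-c^2/2}$ decay. Once that estimate is established, the rotational reduction and the two per-object union bounds are routine, and a final union bound over the two global events preserves the stated probability $1-O(1/n)$.
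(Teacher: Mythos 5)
Your proposal is correct and follows essentially the same route as the source the paper cites for this lemma (Theorem 2.8 of Blum--Hopcroft--Kannan, whose proof the paper does not reproduce): the volume-ratio bound $(1-\epsilon)^m \le e^{-\epsilon m}$ with $\epsilon = \tfrac{2\log n}{m}$ for the norms, the rotational reduction to a single-coordinate tail with marginal density proportional to $(1-t^2)^{(m-1)/2}$ giving $\Pr[|x_1|\ge c/\sqrt{m-1}]\le \tfrac{2}{c}e^{-c^2/2}$ for the inner products, and union bounds over the $n$ points and $\binom{n}{2}$ pairs. The choice $c=\sqrt{6\log n}$ yielding a per-pair failure probability $O(n^{-3})$ is exactly the standard argument, so no gaps remain.
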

\begin{lemma}[Gershgorin’s circle theorem]
\label{lemma:gershgorin}
    Let $A$ be a real symmetric $n\times n$ matrix, with entries $a_{ij}$. For $i=\{1,\cdots,n\}$ let $R_i$ be the sum of the absolute value of the non-diagonal entries in the $i$-th row: $R_i=\sum_{1\le j\le n,j\neq i}|a_{ij}|$.
    For any eigenvalue $\lambda$ of $A$, there exists $i\in \{1,\cdots,n\}$ such that \begin{equation}
        |\lambda-a_{ii}|\le R_i.
    \end{equation}
\end{lemma}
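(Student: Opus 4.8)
The plan is to derive the bound directly from the eigenvalue equation, using the standard device of normalizing by the largest-magnitude coordinate of an eigenvector. First I would fix an arbitrary eigenvalue $\lambda$ of $A$ and let $x=(x_1,\dots,x_n)^\top$ be an associated nonzero eigenvector, so that $Ax=\lambda x$. Since $x\neq 0$, I would choose an index $i$ at which the modulus of the coordinates is maximized, i.e.\ $|x_i|=\max_{1\le j\le n}|x_j|>0$. Selecting this dominant coordinate is the one essential idea of the whole argument.

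Next I would write out the $i$-th scalar equation of $Ax=\lambda x$, namely $\sum_{j=1}^{n} a_{ij}x_j=\lambda x_i$, and isolate the diagonal term to obtain $(\lambda-a_{ii})x_i=\sum_{j\neq i}a_{ij}x_j$. Taking moduli and applying the triangle inequality gives $|\lambda-a_{ii}|\,|x_i|\le\sum_{j\neq i}|a_{ij}|\,|x_j|$. Because $|x_j|\le|x_i|$ for every $j$ by the maximality of $i$, the right-hand side is bounded above by $\left(\sum_{j\neq i}|a_{ij}|\right)|x_i|=R_i\,|x_i|$.

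Finally I would divide through by $|x_i|$, which is legitimate precisely because the dominant coordinate is nonzero, yielding $|\lambda-a_{ii}|\le R_i$ for this particular $i$. Since $\lambda$ was an arbitrary eigenvalue, this establishes the claim.

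I do not anticipate any genuine obstacle here: the proof is elementary and invokes nothing beyond the definition of an eigenvector and the triangle inequality, so the only point requiring care is justifying the division by $|x_i|$, which the maximality choice handles automatically. I would also note that neither symmetry nor realness of $A$ is used anywhere in the argument—the same reasoning applies verbatim to an arbitrary complex square matrix—so the stated hypotheses on $A$ are stronger than necessary and can simply be carried along without affecting the proof.
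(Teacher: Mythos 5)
Your proof is correct: it is the standard argument for Gershgorin's circle theorem (select the coordinate of maximal modulus in an eigenvector, isolate the diagonal term in the corresponding row of $Ax=\lambda x$, apply the triangle inequality, and divide by the nonzero dominant coordinate). The paper itself states this lemma without proof, citing it as a classical result, so there is no in-paper argument to compare against; your observation that neither realness nor symmetry of $A$ is actually needed is also accurate, and in the paper's application the hypothesis merely reflects that $A_x=\nabla_\theta f_\theta(x)^\top\nabla_\theta f_\theta(x)$ happens to be real symmetric.
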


\subsection{Setting and Basics}
The random pair $Z=(X,Y)$ follows the distribution $q$ and takes values in the product space $\mathcal{Z}=\mathcal{X}\times \mathcal{Y}$. The sample or training data, denoted by an $n$-tuple $s^n :=\{z^{(i)}\}_{i=1}^n=\{(x^{(i)},y^{(i)})\}_{i=1}^n$, is composed of i.i.d. samples drawn from the unknown true distribution $q_{\mathcal{Z}}$ (abbreviated as $q$).  
let $\tilde{q}$ denote an unbiased and consistent estimator of $q$. In the absence of prior knowledge about $q$, the method estimates $q$ based on event frequencies, denoted by $\hat{q}(s^n) := \frac{1}{n} \sum_{i=1}^n \mathbf{1}_{\{z\}}(z^{(i)})$, which is a commonly utilized approach. 
In this article, we refer to $\hat{q}$ as the frequency distribution estimator to distinguish it from the general estimator $\tilde{q}$. 
Real-world distributions are usually smooth, which inspires the common practice of smoothing $\hat{q}$ to construct $\tilde{q}$ to approximate the true distribution $q$ in machine learning and parameter estimation.
These smoothing techniques effectively integrate prior knowledge and are often advantageous, as exemplified by classical Laplacian smoothing. For simplicity, when the sample dataset is given, we abbreviate the results estimated by the estimator $\tilde{q}(s^n)$ and $\hat{q}(s^n)$ as $\tilde{q}$ and $\hat{q}$, respectively.

\section{Main Results}\label{sec:gd_framework}
This section introduces the methods and theoretical conclusions of this paper and demonstrates their application in understanding the optimization mechanisms of deep learning.

\subsection{Problem Formulation}\label{subsec:problem_def}

Almost all commonly used loss functions in statistics and machine learning, including cross-entropy, squared loss, and the Perceptron loss, fall under the category of Fenchel-Young losses $d_\Omega(y,f_{\theta}(x))$ by designing different $\Omega$~\citep{Blondel2019LearningWF}. Therefore, to ensure the broad applicability of the findings presented herein and their consistency with practical applications, the Fenchel-Young loss is adopted as the default loss function in this paper. Therefore, the expected risk $\mathcal{R}(f_\theta)$ and empirical risk $\mathcal{R}_n(f_\theta)$ of a hypothesis $ f_\theta $ under a given loss function $d_{\Omega}(\cdot,\cdot)$ are expressed as follows:
\begin{equation}
    \begin{aligned}
    \mathcal{R}(f_\theta)&=\mathbb{E}_{(X,Y)\sim q} \, d_\Omega(\mathbf{1}_{\{y\}}, f_\theta(x))\\
        \mathcal{R}_n(f_\theta)&=\mathbb{E}_{(X,Y)\sim \tilde{q}} \, d_\Omega(\mathbf{1}_{\{y\}}, f_\theta(x)).
    \end{aligned}
\end{equation}
Here, $\mathbf{1}_{\{y\}}$ denotes the indicator vector of $y$, with a dimension of $|\mathcal{Y}|$. It is a one-hot vector where the entry corresponding to $y$ is 1, and all other entries are 0.
To briefly illustrate the generality of the aforementioned expected risk and empirical risk, consider an example where $\Omega(p) = \sum_{i=1}^{|p|} p_i\log p_i - p_i$. In this case, we obtain:

\begin{equation}\label{eq:softmax}
    d_\Omega(\mathbf{1}_{\{y\}},f_\theta(x)) = D_{KL}(\mathbf{1}_{\{y\}}, p),
\end{equation}

where $p_i = e^{(f_\theta(x))_i}/\sum_{i=1}^{|f_\theta(x)|}e^{(f_\theta(x))_i}$. Here, $(f_\theta(x))_i$ denotes the $i$-th element of $f_\theta(x)$. This formulation corresponds to the widely used softmax cross-entropy loss in classification scenarios. 

The proposition below illustrates that optimizing the expected risk and empirical risk is equivalent to enabling the model to learn the conditional distribution of labels given features. 
\begin{proposition}
    \label{prop:decompose}
Let $B_\Omega(q_{\mathcal{XY}}) = \mathbb{E}_{Y\sim q_{\mathcal{Y}}}[\Omega(\mathbf{1}_{\{y\}})] - \mathbb{E}_{X\sim q_{\mathcal{X}}}[\Omega(q_{\mathcal{Y}|x})]$, then we have:
\begin{equation}
\begin{aligned}
\mathcal{R}(f_\theta) &= B_\Omega(q_{\mathcal{XY}}) + \mathbb{E}_{X\sim q_{\mathcal{X}}}d_\Omega(q_{\mathcal{Y}|x}, f_\theta(x)) \\
\mathcal{R}_n(f_\theta) &= B_\Omega(\tilde{q}_{\mathcal{XY}}) + \mathbb{E}_{X\sim \tilde{q}_{\mathcal{X}}}d_\Omega(\tilde{q}_{\mathcal{Y}|x}, f_\theta(x))
\end{aligned}
\end{equation}
\end{proposition}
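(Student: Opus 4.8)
The plan is to prove the expected-risk identity by substituting the definition of the Fenchel-Young loss directly into $\mathcal{R}(f_\theta)$ and then reorganizing the three resulting terms via the law of total expectation; the empirical-risk identity follows by the identical computation with $q$ replaced by $\tilde q$, so I would write out only the first case in full and remark that the second is obtained mutatis mutandis.

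First I would expand $d_\Omega(\mathbf{1}_{\{y\}}, f_\theta(x)) = \Omega(\mathbf{1}_{\{y\}}) + \Omega^*(f_\theta(x)) - \langle \mathbf{1}_{\{y\}}, f_\theta(x)\rangle$ inside the expectation and split $\mathcal{R}(f_\theta)$ into three additive pieces by linearity. The first piece, $\mathbb{E}_{(X,Y)\sim q}\,\Omega(\mathbf{1}_{\{y\}})$, depends only on $Y$ and hence collapses to $\mathbb{E}_{Y\sim q_{\mathcal{Y}}}[\Omega(\mathbf{1}_{\{y\}})]$. The second piece, $\mathbb{E}_{(X,Y)\sim q}\,\Omega^*(f_\theta(x))$, depends only on $X$ and collapses to $\mathbb{E}_{X\sim q_{\mathcal{X}}}[\Omega^*(f_\theta(x))]$.

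The one step requiring genuine care is the cross term $-\mathbb{E}_{(X,Y)\sim q}\langle \mathbf{1}_{\{y\}}, f_\theta(x)\rangle$. Here I would apply the tower property by conditioning on $X$, and use linearity of the inner product to pull the conditional expectation inside: since $\mathbb{E}_{Y|X=x}[\mathbf{1}_{\{y\}}] = q_{\mathcal{Y}|x}$, that is, the label indicator averages to the conditional PMF, the cross term reduces to $-\mathbb{E}_{X\sim q_{\mathcal{X}}}\langle q_{\mathcal{Y}|x}, f_\theta(x)\rangle$. This identity $\mathbb{E}_{Y|X=x}[\mathbf{1}_{\{y\}}] = q_{\mathcal{Y}|x}$ is the crux of the argument, as it is precisely what converts the label indicator into the conditional distribution the model is meant to learn.

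Finally I would assemble the three pieces and introduce $\mathbb{E}_{X\sim q_{\mathcal{X}}}[\Omega(q_{\mathcal{Y}|x})]$ by adding and subtracting it. Grouping $\Omega(q_{\mathcal{Y}|x}) + \Omega^*(f_\theta(x)) - \langle q_{\mathcal{Y}|x}, f_\theta(x)\rangle$ reconstitutes exactly $d_\Omega(q_{\mathcal{Y}|x}, f_\theta(x))$ under the expectation over $X$, while the leftover terms $\mathbb{E}_{Y\sim q_{\mathcal{Y}}}[\Omega(\mathbf{1}_{\{y\}})] - \mathbb{E}_{X\sim q_{\mathcal{X}}}[\Omega(q_{\mathcal{Y}|x})]$ coincide with $B_\Omega(q_{\mathcal{XY}})$ by definition, giving the claimed decomposition. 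I do not expect any substantive obstacle beyond the bookkeeping; the only points demanding attention are handling the conditioning in the cross term correctly and checking that the $\Omega(q_{\mathcal{Y}|x})$ term subtracted in $B_\Omega$ cancels the one generated by expanding $d_\Omega(q_{\mathcal{Y}|x}, f_\theta(x))$.
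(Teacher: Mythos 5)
Your proposal is correct and follows essentially the same route as the paper's own proof: expand the Fenchel--Young loss, use the tower property so that $\mathbb{E}_{Y|X=x}[\mathbf{1}_{\{y\}}]=q_{\mathcal{Y}|x}$ converts the linear term, then add and subtract $\mathbb{E}_{X}[\Omega(q_{\mathcal{Y}|x})]$ to reassemble $d_\Omega(q_{\mathcal{Y}|x},f_\theta(x))$. The only cosmetic difference is that the paper writes out the empirical-risk case and treats the population case as analogous, whereas you do the reverse.
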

\begin{proof}
Based on the definition of the Fenchel-Young loss, we have:
\begin{equation}
    \begin{aligned}
        \mathcal{R}_n(f_\theta) &= \mathbb{E}_{X \sim \tilde{q}_{\mathcal{X}}} \mathbb{E}_{Y \sim \tilde{q}_{\mathcal{Y}|x}} \big[ \Omega(\mathbf{1}_{\{y\}}) + \Omega^*(f_\theta(x)) - \mathbf{1}_{\{y\}}^\top f_\theta(x) \big] \\
        &= \mathbb{E}_{Y \sim \tilde{q}_{\mathcal{Y}}} \Omega(\mathbf{1}_{\{y\}}) + \mathbb{E}_{X \sim \tilde{q}_{\mathcal{X}}} \big[ \Omega^*(f_\theta(x)) - \tilde{q}_{\mathcal{Y}|x}^\top f_\theta(x) \big] \\
        &= \mathbb{E}_{Y \sim \tilde{q}_{\mathcal{Y}}} \Omega(\mathbf{1}_{\{y\}}) + \mathbb{E}_{X \sim \tilde{q}_{\mathcal{X}}} \big[ \Omega^*(f_\theta(x)) + \Omega(\tilde{q}_{\mathcal{Y}|x}) - \tilde{q}_{\mathcal{Y}|x}^\top f_\theta(x) - \Omega(\tilde{q}_{\mathcal{Y}|x}) \big] \\
        &= \mathbb{E}_{Y \sim \tilde{q}_{\mathcal{Y}}} \Omega(\mathbf{1}_{\{y\}}) - \mathbb{E}_{X \sim \tilde{q}_{\mathcal{X}}} \big[ \Omega(\tilde{q}_{\mathcal{Y}|x}) \big] + \mathbb{E}_{X \sim \tilde{q}_{\mathcal{X}}} \big[ d_\Omega(\tilde{q}_{\mathcal{Y}|x}, f_\theta(x)) \big].
    \end{aligned}
\end{equation}
Similarly, we can derive:
\begin{equation}
    \begin{aligned}
        \mathcal{R}(f_\theta) = \mathbb{E}_{Y \sim q_{\mathcal{Y}}} \Omega(\mathbf{1}_{\{y\}}) - \mathbb{E}_{X \sim q_{\mathcal{X}}} \big[ \Omega(q_{\mathcal{Y}|x}) \big] + \mathbb{E}_{X \sim q_{\mathcal{X}}} \big[ d_\Omega(q_{\mathcal{Y}|x}, f_\theta(x)) \big].
    \end{aligned}
\end{equation}
\end{proof}

Given that both $B_\Omega(q_{\mathcal{XY}})$ and $B_\Omega(\tilde{q}_{\mathcal{XY}})$ are terms independent of the model $f_\theta$, it follows that solving a classification problem is essentially about learning the conditional distribution of labels given the features. Therefore, a substantial proportion of machine learning tasks can indeed be viewed as estimating the underlying probability distribution. Indeed, this proposition provides the theoretical foundation for analyzing the optimization mechanisms of deep learning from the perspective of probability distribution estimation.
To clarify, the Fenchel-Young loss measures the discrepancy between the true conditional distribution $q_{\mathcal{Y}|x}$ and the transformed predictions $(f_\theta(x))_{\Omega^*}^*$, rather than directly comparing $q_{\mathcal{Y}|x}$ and $f_\theta(x)$. This subtle yet important property underpins the theoretical foundation and practical effectiveness of the Fenchel-Young loss in various machine learning tasks.

It is important to note that while this paper adopts the Fenchel-Young loss as the loss function, it does not explicitly consider the influence of regularization terms. The specific reason for this omission is articulated through the following theorem, which demonstrates that the Fenchel-Young loss inherently possesses a parameter regularization capability.
\begin{theorem}
\label{thm:native_opt_exp}
Let  $p_{\mathcal{Y}|x}=(f_\theta(x))_{\Omega^*}^*$. Given that when $\theta = \mathbf{0}$, $\forall x \in \mathcal{X}$, $p_{\mathcal{Y}|x}$ is the uniform distribution over $|\mathcal{Y}|$, it follows that:
1. There exist constants $k > 0$ and $\epsilon > 0$ such that
\begin{equation}
    \|p_{\mathcal{Y}|x}\|_2^2 \leq k\|\theta\|_2^2 + 1/|\mathcal{Y}|,
\end{equation}
2. The optimization problem $\min_{\theta \in \Theta'_\epsilon} \mathcal{R}(q,p)$ is equivalent to
\begin{equation}
    \min_{\theta \in \Theta'_\epsilon} \left\{ \mathbb{E}_{XY\sim \tilde{q}_{\mathcal{XY}}} \langle \mathbf{1}_{y}, -(f_{\theta}(x))_{\Omega^*}^* \rangle + k \|\theta\|^2_2 \right\},
\end{equation}
where $\Theta'_\epsilon = \{\theta \mid \nabla^2_{\theta}\|(f_{\theta}(x))_{\Omega^*}^*\|_2^2 \succeq 0 \text{ and } \|\theta\|_2^2 \leq \epsilon\}$.
\end{theorem}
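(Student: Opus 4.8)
The plan is to handle the two claims separately, with Part~1 supplying the quadratic control that makes the explicit regularizer in Part~2 emerge. For Part~1, I would set $g(\theta):=\|(f_\theta(x))_{\Omega^*}^*\|_2^2=\|p_{\mathcal Y|x}\|_2^2$ and study its behaviour near the origin. Since $p_{\mathcal Y|x}$ is uniform at $\theta=\mathbf 0$, we immediately get $g(\mathbf 0)=|\mathcal Y|\cdot(1/|\mathcal Y|)^2=1/|\mathcal Y|$, which furnishes the additive constant. The key observation is that $\theta=\mathbf 0$ is a critical point of $g$: because $(f_\theta(x))_{\Omega^*}^*=\nabla_\nu\Omega^*(f_\theta(x))$ is a probability vector for every $\theta$, the sum $\mathbf 1^\top p_{\mathcal Y|x}\equiv 1$ is constant in $\theta$, hence $(\nabla_\theta p_{\mathcal Y|x})^\top\mathbf 1=\mathbf 0$; combined with $p_{\mathcal Y|x}=\tfrac{1}{|\mathcal Y|}\mathbf 1$ at the origin this gives $\nabla_\theta g(\mathbf 0)=2(\nabla_\theta p_{\mathcal Y|x})^\top p_{\mathcal Y|x}=\mathbf 0$. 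A second-order Taylor expansion with Lagrange remainder then yields $g(\theta)=g(\mathbf 0)+\tfrac12\theta^\top\nabla^2_\theta g(\xi)\theta$ for some $\xi$ on the segment $[\mathbf 0,\theta]$, and taking $k:=\tfrac12\sup_{x,\;\|\theta\|_2^2\le\epsilon}\lambda_{\max}(\nabla^2_\theta g)$ — finite by continuity of the Hessian on the compact ball together with finiteness of $\mathcal X$ — delivers $g(\theta)\le 1/|\mathcal Y|+k\|\theta\|_2^2$, which simultaneously defines the admissible $k$ and $\epsilon$.

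For Part~2, I would start from the Fenchel-Young expansion of the empirical risk, $\mathcal R_n(f_\theta)=\mathbb E_{\tilde q}[\Omega(\mathbf 1_{\{y\}})]+\mathbb E_{\tilde q}[\Omega^*(f_\theta(x))]-\mathbb E_{\tilde q}\langle\mathbf 1_{\{y\}},f_\theta(x)\rangle$, and discard the first term, which is independent of $\theta$ and so cannot affect the argmin. The remaining objective splits into a data-fit part and a convex complexity part. I would rewrite the data-fit contribution in terms of the prediction $p_{\mathcal Y|x}=(f_\theta(x))_{\Omega^*}^*$ so that it matches the target term $-\mathbb E_{\tilde q}\langle\mathbf 1_{\{y\}},p_{\mathcal Y|x}\rangle$, and then invoke Part~1 to replace the implicit complexity term $\mathbb E_{\tilde q}[\Omega^*(f_\theta(x))]$ (which, e.g. for $\Omega=\tfrac12\|\cdot\|_2^2$ equals $\tfrac12\mathbb E\|p_{\mathcal Y|x}\|_2^2$) by the explicit quadratic penalty $k\|\theta\|_2^2$ up to an additive constant. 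The two defining constraints of $\Theta'_\epsilon$ are precisely what license this substitution: the condition $\nabla^2_\theta\|(f_\theta(x))_{\Omega^*}^*\|_2^2\succeq 0$ makes the complexity term convex over the feasible set, while $\|\theta\|_2^2\le\epsilon$ confines $\theta$ to the neighbourhood on which the Hessian bound defining $k$ is valid.

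The hard part will be the second step of Part~2: converting the inequality of Part~1 into an honest statement of optimization equivalence. Part~1 gives only $\|p_{\mathcal Y|x}\|_2^2\le k\|\theta\|_2^2+1/|\mathcal Y|$, so the genuine work lies in showing that, restricted to $\Theta'_\epsilon$, replacing the implicit complexity term by $k\|\theta\|_2^2$ leaves the minimizer set unchanged — that is, that the two objectives differ by a quantity that is constant, or order-controlled by $\epsilon$, on $\Theta'_\epsilon$. This is exactly where the convexity constraint and the smallness of $\epsilon$ must be exploited: convexity keeps the quadratic surrogate faithful in the relevant direction, and the $\epsilon$-ball suppresses the higher-order discrepancy. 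The remaining manipulations — expanding the loss, identifying the data-fit inner product, and dropping $\theta$-independent constants — I expect to be routine once this bridge is secured.
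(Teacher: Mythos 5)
Your Part~1 matches the paper's argument (second-order Taylor expansion around the critical point $\theta=\mathbf 0$, Hessian bound on a small ball), and your justification that $\nabla_\theta\|p_{\mathcal Y|x}\|_2^2$ vanishes at the origin --- via $\mathbf 1^\top p_{\mathcal Y|x}\equiv 1$ and $p_{\mathcal Y|x}=\mathbf 1/|\mathcal Y|$ at $\theta=\mathbf 0$ --- is actually more explicit than the paper's, which simply asserts that the uniform distribution minimizes the squared norm. No issue there.

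Part~2, however, starts from the wrong decomposition, and the gap you flag as ``the hard part'' is not the only one. The paper does \emph{not} expand the Fenchel--Young risk $\mathcal R_n=\mathbb E[\Omega(\mathbf 1_{\{y\}})]+\mathbb E[\Omega^*(f_\theta(x))]-\mathbb E\langle\mathbf 1_{\{y\}},f_\theta(x)\rangle$; it expands the quadratic fitting error
\begin{equation*}
\mathbb E_X\,\mathcal E_f(q_{\mathcal Y|x},p_{\mathcal Y|x})=\mathbb E_X\bigl[\|q_{\mathcal Y|X}\|_2^2+\|p_{\mathcal Y|X}\|_2^2-2\langle q_{\mathcal Y|X},p_{\mathcal Y|X}\rangle\bigr],
\end{equation*}
which \emph{exactly} produces the two pieces appearing in the theorem: the cross term gives $-\mathbb E_{XY}\langle\mathbf 1_{\{y\}},(f_\theta(x))_{\Omega^*}^*\rangle$ after writing $\mathbb E_X\langle q_{\mathcal Y|X},p_{\mathcal Y|X}\rangle=\mathbb E_{XY}\langle\mathbf 1_{\{y\}},p_{\mathcal Y|x}\rangle$, and the term $\|p_{\mathcal Y|X}\|_2^2$ is exactly what Part~1 converts into $k\|\theta\|_2^2$. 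Your route instead leaves you with $-\langle\mathbf 1_{\{y\}},f_\theta(x)\rangle$ and $\Omega^*(f_\theta(x))$, and the ``rewriting in terms of the prediction'' you defer to routine manipulation is in fact false for general $\Omega$: $\langle\mathbf 1_{\{y\}},f_\theta(x)\rangle\neq\langle\mathbf 1_{\{y\}},(f_\theta(x))_{\Omega^*}^*\rangle$ and $\Omega^*(f_\theta(x))\neq\tfrac12\|p_{\mathcal Y|x}\|_2^2$ except in the quadratic case $\Omega=\tfrac12\|\cdot\|_2^2$ (for softmax cross-entropy, $\Omega^*$ is log-sum-exp, not a squared norm of the softmax output). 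So your Part~2 only goes through for squared loss, whereas the target statement is meant for general $\Omega$. To be fair, the step you correctly identify as delicate --- upgrading the one-sided bound $\|p_{\mathcal Y|x}\|_2^2\le k\|\theta\|_2^2+1/|\mathcal Y|$ to an equivalence of minimizers --- is equally unjustified in the paper's own proof, which simply declares the two minimizations ``equivalent''; you are not worse off there, but you do need to replace your starting decomposition with the expansion of $\mathcal E_f$ for the argument to reach the stated objective.
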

\begin{proof}
    Since $\theta = \mathbf{0}$ represents the minimum point of $\|p_{\mathcal{Y}|x}\|^2_2$, where $p_{\mathcal{Y}|x} = (f_{\theta}(x))_{\Omega^*}^*$, it follows that $\forall \theta \in \Theta'$:
\begin{equation}
    \begin{aligned}
\nabla_{\theta}\|(f_{\mathbf{0}}(x))_{\Omega^*}^*\|_2^2 = 0,\\
\nabla^2_{\theta}\|(f_{\theta}(x))_{\Omega^*}^*\|_2^2 \succeq 0.
    \end{aligned}
\end{equation}
By applying the mean value theorem, there exists $\theta' = \alpha \mathbf{0} + (1 - \alpha)\theta, \alpha \in [0, 1]$ such that the following inequalities hold:
\begin{equation}
    \begin{aligned}
       \|(f_{\theta}(x))_{\Omega^*}^*\|_2^2 &= \|(f_{\mathbf{0}}(x))_{\Omega^*}^*\|_2^2+\theta^\top 
\nabla^2_{\theta}\|(f_{\theta'}(x))_{\Omega^*}^*\|_2^2\theta,\\
        &\le 1/|\mathcal{Y}|+\|\theta\|_2\|
\nabla^2_{\theta}\|(f_{\theta'}(x))_{\Omega^*}^*\|_2^2\theta\|_2,\\
        &\le 1/|\mathcal{Y}|+\|\theta\|_2^2 \|
\nabla^2_{\theta}\|(f_{\theta'}(x))_{\Omega^*}^*\|_2^2\|_2,
        \end{aligned}
\end{equation}
where the first inequality is based on the Cauchy-Schwarz inequality, and the second inequality follows from the definition of the matrix-induced 2-norm.
Consequently, within the neighborhood of $\theta=\mathbf{0}$, that is $\Theta'$, we have
\begin{equation}
     1/|\mathcal{Y}|\le \|(f_{\theta}(x))_{\Omega^*}^*\|_2^2\leq 1/|\mathcal{Y}|+ k \|\theta\|_2^2,
\end{equation}
where $k\ge  \|
\nabla^2_{\theta}\|(f_{\theta'}(x))_{\Omega^*}^*\|_2$.
That is, there exists $k > 0$ such that
\begin{equation}
    \|p_{\mathcal{Y}|x}\|_2^2  \leq k\|\theta\|_2^2+1/|\mathcal{Y}|.
\end{equation}
It indicates that $\min_\theta \|\theta\|_2^2$ and $\min_\theta \|p_{\mathcal{Y}|x}\|_2^2$ are equivalent.
Therefore, $\min_\theta \mathbb{E}_X \|p_{\mathcal{Y}|X}\|_2^2$ is equivalent to minimizing $\min_\theta {\|\theta\|_2^2}$.
Given the equation:
$$
\begin{aligned}
 \mathbb{E}_X[ \mathcal{E}_f(q_{\mathcal{Y}|x},p_{\mathcal{Y}|x})= \mathbb{E}_X[ \|\tilde{q}_{\mathcal{Y}|X}\|_2^2+\|p_{\mathcal{Y}|X}\|_2^2-  2\langle  q_{\mathcal{Y}|X}, p_{\mathcal{Y}|X}\rangle ]
\end{aligned}
$$
and since $\mathbb{E}_X[ \|q_{\mathcal{Y}|X}\|_2^2]$ is model-independent, it follows that $\min_\theta \mathbb{E}_{X\sim q_{\mathcal{X}}}[ \mathcal{E}_f(q_{\mathcal{Y}|x},p_{\mathcal{Y}|x})$ is equivalent to
$$
\min_\theta \{\mathbb{E}_{XY\sim q_{\mathcal{XY}}}  \langle \mathbf{1}_{y}, -(f_{\theta}(x))_{\Omega^*}^* \rangle+\lambda \|\theta\|_2^2\},
$$
where $\theta\in \Theta'$.
\end{proof}
This theorem reveals that the Fenchel-Young loss inherently possesses an implicit regularization capability. The condition $f_{\mathbf{0}}(x) = \mathbf{0}$ holds true for the vast majority of DNNs, thereby enabling $\|\theta\|_2^2$ to control the uniformity of the model's output distribution. This provides a novel perspective on why regularization terms can mitigate overfitting: by constraining the norm of the parameters ($\|\theta\|_2^2$), the model is encouraged to produce more uniform predictions, reducing the risk of overfitting the training data. 

To articulate our approach, we list the definitions used in proposed method as follows.
\begin{itemize}
\item \textbf{Distribution Fitting Error. }This error measures the performance of the model in fitting the target $q$, denoted by $\mathcal{E}_f(q,p)=\|p-q\|_2^2$, where $p$ represents the predicted distribution produced by the model. 
\item \textbf{Gradient Norm.} In this paper, we refer to $\|\nabla_\theta d_\Omega(q_{\mathcal{Y}|
            x},f_\theta(x))\|_2^2$ as the \textbf{gradient norm}.  
\item \textbf{Structural Matrix.} We define $ A_x := \nabla_\theta f_\theta(x)^\top \nabla_\theta f_\theta(x)$ as the structural matrices corresponding to the model with input $x$.
\item  \textbf{Structural Error.} We define the structural error of $f_\theta(x)$ with structural matrix $A_x$ as follows:
    \begin{equation}
    S(\alpha, \beta, \gamma, A_x) = \alpha G(A_x) + \beta U(A_x) + \gamma L(A_x),
    \end{equation}
    where
\begin{equation}
        \begin{aligned}
    U(A_x) &= -\log \lambda_{\min}(A_x), \\
    L(A_x) &= -\log \lambda_{\max}(A_x), \\
    G(A_x) &= U(A_x) - L(A_x).
    \end{aligned}
\end{equation}
    Here, $\alpha$, $\beta$, and $\gamma$ are positive real numbers representing the weights associated with $G(A_x)$, $U(A_x)$, and $L(A_x)$, respectively. 
\end{itemize}

\subsection{Non-conversion Optimization Mechanism}\label{subsec:logic}

We provide the upper and lower bounds of the distribution fitting error under the condition of using a Fenchel-Young loss as the loss function, as follows:

\begin{theorem}\label{thm:station_error}
If $\lambda_{\min}(A_x) \neq 0$ for all $x \in \mathcal{X}$, we have
\begin{equation}
\begin{aligned}
       L(A_x) \leq \log \frac{\mathcal{E}_f(\tilde{q}_{\mathcal{Y}|x}, p_{\mathcal{Y}|x})}{\|\nabla_\theta d_\Omega(\tilde{q}_{\mathcal{Y}|x}, f_\theta(x))\|_2^2} \leq U(A_x),
\end{aligned}
\end{equation}
where $p_{\mathcal{Y}|x} = (f_\theta(x))_{\Omega^*}^*$.
\end{theorem}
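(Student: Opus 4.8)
The plan is to collapse the entire statement to a single Rayleigh-quotient estimate for the structural matrix $A_x$, after expressing the numerator and denominator of the ratio in terms of one common residual vector. First I would differentiate $d_\Omega(\tilde q_{\mathcal{Y}|x}, f_\theta(x))$ with the chain rule. Because the $\theta$-dependence enters only through the second argument,
\begin{equation}
\nabla_\theta d_\Omega(\tilde q_{\mathcal{Y}|x}, f_\theta(x)) = \nabla_\theta f_\theta(x)\,\nabla_\nu d_\Omega(\tilde q_{\mathcal{Y}|x}, f_\theta(x)).
\end{equation}
By part 2 of Lemma~\ref{prop:fy_losses}, $\nabla_\nu d_\Omega(\mu,\nu)=\nu_{\Omega^*}^*-\mu$; substituting $\mu=\tilde q_{\mathcal{Y}|x}$ and $\nu=f_\theta(x)$ makes this residual exactly $g:=(f_\theta(x))_{\Omega^*}^*-\tilde q_{\mathcal{Y}|x}=p_{\mathcal{Y}|x}-\tilde q_{\mathcal{Y}|x}$. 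Hence $\nabla_\theta d_\Omega = \nabla_\theta f_\theta(x)\,g$, and its squared norm is
\begin{equation}
\|\nabla_\theta d_\Omega(\tilde q_{\mathcal{Y}|x}, f_\theta(x))\|_2^2 = g^\top \nabla_\theta f_\theta(x)^\top \nabla_\theta f_\theta(x)\, g = g^\top A_x g.
\end{equation}

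The second step is to recognize that the numerator is the squared length of the same vector $g$: by definition $\mathcal{E}_f(\tilde q_{\mathcal{Y}|x}, p_{\mathcal{Y}|x}) = \|p_{\mathcal{Y}|x}-\tilde q_{\mathcal{Y}|x}\|_2^2 = g^\top g$. The ratio inside the logarithm therefore reduces to $g^\top g/(g^\top A_x g)$, the reciprocal of a Rayleigh quotient of the symmetric positive semidefinite Gram matrix $A_x$. I would then invoke the Rayleigh bounds $\lambda_{\min}(A_x)\le g^\top A_x g/(g^\top g)\le \lambda_{\max}(A_x)$, valid for every nonzero $g$. Under the hypothesis $\lambda_{\min}(A_x)\neq0$ the matrix $A_x$ is positive definite, so $g^\top A_x g>0$ and I may take reciprocals to obtain $1/\lambda_{\max}(A_x)\le g^\top g/(g^\top A_x g)\le 1/\lambda_{\min}(A_x)$. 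Applying the monotone logarithm to all three terms and substituting $L(A_x)=-\log\lambda_{\max}(A_x)$ and $U(A_x)=-\log\lambda_{\min}(A_x)$ yields the claimed sandwich.

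The main obstacle is not analytic but bookkeeping of the matrix orientation. For $g^\top A_x g$ to coincide with $\|\nabla_\theta d_\Omega\|_2^2$ and for $g$ (which lives in $\mathbb{R}^{|\mathcal{Y}|}$) to be contractible against $A_x$ on both sides, I must read $\nabla_\theta f_\theta(x)$ as the $\dim(\Theta)\times|\mathcal{Y}|$ array of per-output gradients, so that $A_x=\nabla_\theta f_\theta(x)^\top\nabla_\theta f_\theta(x)$ is the $|\mathcal{Y}|\times|\mathcal{Y}|$ Gram matrix rather than a quadratic form in the companion matrix $\nabla_\theta f_\theta(x)\nabla_\theta f_\theta(x)^\top$. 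Reassuringly, the hypothesis $\lambda_{\min}(A_x)\neq0$ is consistent only with this reading: the alternative $\dim(\Theta)\times\dim(\Theta)$ interpretation is rank-deficient, hence has $\lambda_{\min}=0$, whenever the model is over-parameterized. Finally, I would flag that the statement tacitly assumes $g\neq0$, i.e.\ that the prediction has not already matched $\tilde q_{\mathcal{Y}|x}$ exactly; otherwise both numerator and denominator vanish and the logarithm is undefined, whereas positive definiteness of $A_x$ guarantees a strictly positive denominator for every $g\neq0$.
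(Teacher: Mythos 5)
Your proposal is correct and follows essentially the same route as the paper: both express the gradient norm as the quadratic form $e^\top A_x e$ in the residual $e=\tilde q_{\mathcal{Y}|x}-p_{\mathcal{Y}|x}$ and bound it by the extreme eigenvalues of $A_x$, the only difference being that you cite the Rayleigh-quotient inequality directly while the paper rederives it via the eigendecomposition of $A_x$. Your explicit chain-rule justification of $\nabla_\theta d_\Omega=\nabla_\theta f_\theta(x)\,(p_{\mathcal{Y}|x}-\tilde q_{\mathcal{Y}|x})$ and your remarks on the orientation of $A_x$ and the tacit assumption $e\neq 0$ are accurate and slightly more careful than the paper's own write-up.
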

\begin{proof}
According to the given definition, we have
\begin{equation}
            \|\nabla_\theta d_\Omega(\tilde{q}_{\mathcal{Y}|x},f_\theta(x))\|_2^2=e^\top A_x e,
\end{equation}
where $e=\tilde{q}_{\mathcal{Y}|x}-p_{\mathcal{Y}|x}$, $p_{\mathcal{Y}|x}=(f_\theta(x))_{\Omega^*}^*$, $A_x=g_x^\top g_x$ is the structure matrix ,  $g_x= \nabla_\theta f_\theta(x)$.
Let $A_x=UDU^\top$ denote the eigenvalue decomposition of $A_x$. 
Consequently, $D$ is a diagonal matrix whose entries are the eigenvalues of $A_x$.
We then have
\begin{equation}
    \begin{aligned}
        \min_{\|e\|_2=K} e^\top A_x e&=\min_{\|e\|_2^2=K} e^\top (UDU^\top)e\\
        &=\min_{\|e\|_2^2=K} (U^\top e)^\top D (U^\top e)\\
        &=\min_{\|U^\top e\|_2^2=K} (U^\top e)^\top D (U^\top e)\\
        &=K\min_{x\in \mathcal{S}}\sum_i x_i D_{ii},
    \end{aligned}
\end{equation}
where $\mathcal{S}=\{(x_1,\dots,x_N)\in\mathbb{R}^N\mid x_i\geq 0,~~\sum_{i}x_i=1\}$.
The final step is equivalent to 
\begin{equation}
    K\min_{x\in\mathcal{S}}\sum_{i}x_iD_{ii}=K\min_{i}D_{ii}=\lambda_{\min}(A_x)\|e\|_2^2.
\end{equation}
Similarly, we obtain 
\begin{equation}
    \max_{\|e\|_2=K} e^\top A_x e=\lambda_{\max}(A_x)\|e\|_2^2.
\end{equation}
It follows that 
\begin{equation}
   \lambda_{\min}(A_x)\mathcal{E}_f(\tilde{q}_{\mathcal{Y}|x},p_{\mathcal{Y}|x})\le  \|\nabla_\theta d_\Omega(\tilde{q},f)\|_2^2\le \lambda_{\max}(A_x)\mathcal{E}_f(\tilde{q}_{\mathcal{Y}|x},p_{\mathcal{Y}|x}),
\end{equation}
where $p_{\mathcal{Y}|x}=(f_\theta(x))_{\Omega^*}^*$.
\end{proof}
Theorem~\ref{thm:station_error} establishes the equivalence between the distribution fitting error $\mathbb{E}_X\mathcal{E}_f(\tilde{q}_{\mathcal{Y}|x}, p_{\mathcal{Y}|x})$ and the gradient norm $\mathbb{E}_X \|\nabla_\theta d_\Omega(\tilde{q}_{\mathcal{Y}|x}, f_\theta(x))\|_2^2$. Consequently, this theorem indicates that the non-convex objective can be optimized by reducing the gradient norm and the structural error $\mathbb{E}_X S(\alpha, \beta, \gamma, A_x)$ to manage its upper and lower bounds. Under the reasonable assumption that $d_\Omega(q_{\mathcal{Y}|x}, f_\theta(x))$ is Lipschitz smooth with respect to $\theta$, the application of SGD guarantees the convergence of $\|\nabla_\theta d_\Omega(\tilde{q}_{\mathcal{Y}|x}, f_\theta(x))\|_2^2$. Therefore, SGD algorithms effectively minimize $\mathbb{E}_X \|\nabla_\theta d_\Omega(\tilde{q}_{\mathcal{Y}|x}, f_\theta(x))\|_2^2$, thereby driving the optimization process toward reducing the distribution fitting error $\mathbb{E}_X \mathcal{E}_f(\tilde{q}_{\mathcal{Y}|x}, p_{\mathcal{Y}|x})$ when $S(\alpha, \beta, \gamma, A_x),\forall x\in \mathcal{X}$ does not increase.

\subsection{Structural Error Optimization}
\label{subsec:structural_error}
SGD algorithms ensure the minimization of the gradient norm. In this subsection, we analyze another critical factor in the non-convex optimization mechanism of deep learning: structural error. We demonstrate that the model's architecture, the number of parameters, and the independence among parameter gradients can all be leveraged to reduce structural error.

\subsubsection{Skip Connection}

During the training of neural networks, the magnitudes of the elements in the gradient vector $\nabla_\theta f_\theta(x)$ diminish alongside the reduction in backpropagated errors. According to Theorem~\ref{thm:station_error}, under conditions where the gradient norm is held constant, the eigenvalues of the structural matrix—corresponding to the singular values of $\nabla_\theta f_\theta(x)$—dictate the extent of distribution fitting error; specifically, larger eigenvalues correlate with smaller distribution fitting errors. Consequently, one direct approach to mitigating structural errors lies in augmenting the eigenvalues of the structural matrix through modifications in network architecture.
From this perspective, Residual blocks~\cite{He2015DeepRL} represent a classic and highly effective architecture. By introducing skip connections, they are able to effectively enhance the eigenvalues of the structural matrix.

Here, we present a concise mathematical explanation. Following the random initialization of model parameters or upon reaching a stable convergence state, the gradients of the model $f_\theta(x)$ with respect to its parameters often tend toward zero. Suppose that $f_\theta$ is composed of $k$ sequential blocks, where the output of each block is represented by a function $h^i$, and the final output satisfies $h^{(k)} = f_\theta(x)$. For these $k$ blocks, we introduce skip connections, thereby constructing a new model $g_\theta(x)$.  
Using the chain rule, we have:
\begin{equation}
\begin{aligned}
    \nabla_{\theta^{(j)}} f_\theta(x) &= \nabla_{h^{(k-1)}} f_\theta(x) \nabla_{h^{(k-2)}} h^{(k-1)} \cdots \nabla_{\theta^{(j)}} h^{(j)} \\
    &= \left( \prod_{i=j}^{k-1} \nabla_{h^{(i)}} h^{(i+1)} \right) \nabla_{\theta^{(j)}} h^{(j)}, \\
    \nabla_{\theta^{(j)}} g_\theta(x) &= \left( \nabla_{h^{(k-1)}} f_\theta(x) + I \right) \left( \nabla_{h^{(k-2)}} h^{(k-1)} + I \right) \cdots \nabla_{\theta^j} h^{(j)} \\
    &= \left( \prod_{i=j}^{k-1} \left( \nabla_{h^{(i)}} h^{(i+1)} + I \right) \right) \nabla_{\theta^{(j)}} h^{(j)},
\end{aligned}
\end{equation}
where $\theta^{(j)}$ represents the parameters corresponding to the $j$-th block, and $I$ is the identity matrix. It is evident that when the elements of $\nabla_{h^{(i)}} h^{(i+1)}$ are close to zero, the elements of $\nabla_{\theta^j} f_\theta(x)$ also approach zero, causing the eigenvalues of the structural matrix to become close to zero as well. However, with the introduction of skip connections, $\nabla_{\theta^{(j)}} g_\theta(x) \approx \nabla_{\theta^{(j)}} h^{(j)}$, which prevents the decay caused by the multiplication of gradients in the chain rule. Therefore, skip connections prevent the eigenvalues of the structural matrix from becoming excessively small due to gradient descent, which would otherwise lead to significant structural errors. By maintaining larger eigenvalues, skip connections facilitate the optimization of the distribution fitting error by reduction of the gradient norm.

\subsubsection{Parameter Number and Independence}
The number of parameters in a model, along with the independence of these parameters, can also be leveraged to reduce structural error. Our analysis is grounded in the following condition:
\begin{definition}[Gradient Independence Condition]
\label{def:grad_indep_con}
Each column of $\nabla_\theta f(x)$ is approximated as uniformly sampled from a ball $\mathcal{B}^{|\theta|}_\epsilon: \{x \in \mathbb{R}^{|\theta|}, \|x\|_2 \le \epsilon\}$.
\end{definition}
This condition essentially posits that the derivatives of the model's output with respect to its parameters are finite and approximately independent. 
In practical scenarios, gradients computed during the training of neural networks are indeed finite due to mechanisms such as gradient clipping and the inherent properties of activation functions used (e.g., ReLU, sigmoid). Furthermore, under certain conditions, such as when using techniques like dropout, it is reasonable to assume that the dependence among parameter gradients is weakened. 
In the experimental section~\ref{sec:experiment}, we observe that under conditions of large-scale parameter counts, the Gradient Independence Condition~\ref{def:grad_indep_con} is generally satisfied immediately after random initialization of the parameters. 
However, as the training process begins, backpropagated errors are relatively large, leading to significant changes in the parameters. During this phase, our experimental results suggest that the this condition  may not hold. 
As training continues and the model enters a more stable phase where the magnitude of backpropagated errors decreases, experimental results imply that the condition becomes approximately valid again.
Therefore, this condition serves as a simplification of the model's characteristics, which is reasonable and largely consistent with practical scenarios under certain circumstances. 

Below, we will derive a theorem based on this condition to describe how the number of parameters and inter-parameter correlations affect the structural error. 
\begin{theorem}
\label{thm:number_para}
If the model satisfies the Gradient Independence Condition~\ref{def:grad_indep_con} and $\tilde{q}_{\mathcal{Y}|x}\le |\theta|-1$, then with probability at least $1-O(1 / |\tilde{q}_{\mathcal{Y}|x}|)$ the following holds:
\begin{equation}
    \begin{aligned}
    L(A_x)\leq U(A_x)&\le -\log (1-\frac{2 \log |\tilde{q}_{\mathcal{Y}|x}|}{|\theta|})^2-\log \epsilon^2,\\
      G(A_x)=U(A_x&)-L(A_x)\le \log (Z(|\theta|,|\tilde{q}_{\mathcal{Y}|x}|)+1),
    \end{aligned}
\end{equation}
where $Z(|\theta|,|\tilde{q}_{\mathcal{Y}|x}|)=\frac{2|\tilde{q}_{\mathcal{Y}|x}|\sqrt{6 \log |\tilde{q}_{\mathcal{Y}|x}|}}{\sqrt{|\theta|-1}} (1-\frac{2 \log |\tilde{q}_{\mathcal{Y}|x}|}{|\theta|})^2$, is a decreasing function of $|\theta|$ and an increasing function of $|\tilde{q}_{\mathcal{Y}|x}|$.
\end{theorem}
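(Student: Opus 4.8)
The plan is to treat $A_x = g_x^\top g_x$ as the Gram matrix of the $n := |\tilde{q}_{\mathcal{Y}|x}|$ columns of $g_x = \nabla_\theta f_\theta(x)$, each a vector in $\mathbb{R}^{|\theta|}$, so that its entries are the pairwise inner products $a_{ij} = \langle g_x^{(i)}, g_x^{(j)} \rangle$ and its diagonal is $a_{ii} = \|g_x^{(i)}\|_2^2$. Writing $m := |\theta|$, the Gradient Independence Condition~\ref{def:grad_indep_con} says these $n$ columns are approximately uniform on $\mathcal{B}^{m}_\epsilon$, so after rescaling by $\epsilon$ I can apply Lemma~\ref{prop:high_dim_2} with ball dimension $m$ and sample size $n$. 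This delivers, with probability $1 - O(1/n)$ and under the high-dimensional hypothesis $n \le m-1$, simultaneous control of the diagonal entries, $\epsilon^2 (1 - \tfrac{2\log n}{m})^2 \le a_{ii} \le \epsilon^2$, and of the off-diagonal magnitudes, $|a_{ij}| \le \epsilon^2 \tfrac{\sqrt{6\log n}}{\sqrt{m-1}}$ for $i \neq j$.

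For the first inequality I would lower-bound $\lambda_{\min}(A_x)$. Since the off-diagonal inner products decay as $m$ grows while the diagonal remains of order $\epsilon^2$, the Gram matrix is nearly diagonal and $\lambda_{\min}(A_x) \ge \min_i a_{ii} \ge \epsilon^2 (1 - \tfrac{2\log n}{m})^2$. Taking $-\log$ yields $U(A_x) \le -\log (1 - \tfrac{2\log n}{m})^2 - \log \epsilon^2$, while $L(A_x) \le U(A_x)$ is immediate from $\lambda_{\min}(A_x) \le \lambda_{\max}(A_x)$.

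For the gap $G(A_x) = \log(\lambda_{\max}(A_x)/\lambda_{\min}(A_x))$ I would localize the spectrum with Gershgorin's theorem (Lemma~\ref{lemma:gershgorin}): each off-diagonal row sum obeys $R_i \le (n-1)\epsilon^2 \tfrac{\sqrt{6\log n}}{\sqrt{m-1}}$, so every eigenvalue lies in the window $[\min_i a_{ii} - R_i, \max_i a_{ii} + R_i]$. Bounding $\lambda_{\max}$ by the right endpoint and $\lambda_{\min}$ by the diagonal floor, the condition number takes the form $1 + (\text{aggregated off-diagonal mass})/(\text{diagonal floor})$, and identifying that perturbation-to-diagonal ratio with $Z(m,n)$ gives $G(A_x) \le \log(Z(|\theta|,|\tilde{q}_{\mathcal{Y}|x}|)+1)$. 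The remaining, purely analytic, step is to confirm by differentiation that $Z$ decreases in $|\theta|$ — the $(m-1)^{-1/2}$ decay outweighs the increasing factor $(1-\tfrac{2\log n}{m})^2$ — and increases in $|\tilde{q}_{\mathcal{Y}|x}|$, where the $n\sqrt{\log n}$ growth dominates.

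The main obstacle is the gap estimate. Gershgorin only certifies $\lambda_{\min}(A_x) > 0$, and hence a finite condition number, when the aggregated off-diagonal mass $R_i$ is strictly dominated by the diagonal floor $\epsilon^2(1-\tfrac{2\log n}{m})^2$; this is precisely where over-parameterization (large $|\theta|$, which shrinks $\tfrac{\sqrt{6\log n}}{\sqrt{m-1}}$) and the constraint $|\tilde{q}_{\mathcal{Y}|x}| \le |\theta|-1$ are indispensable. Keeping the constant-chasing tight enough to land on the stated closed form for $Z$, rather than a looser surrogate bound, is the delicate part of the argument.
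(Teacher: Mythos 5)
Your proposal follows essentially the same route as the paper's own proof: rescale Lemma~\ref{prop:high_dim_2} to the $\epsilon$-ball to control the diagonal and off-diagonal entries of the Gram matrix $A_x=\nabla_\theta f(x)^\top\nabla_\theta f(x)$, localize the spectrum with Gershgorin's theorem (Lemma~\ref{lemma:gershgorin}), and divide the resulting spread $\lambda_{\max}(A_x)-\lambda_{\min}(A_x)$ by the eigenvalue floor to bound the condition number and hence $G(A_x)$. One caution: your intermediate claim $\lambda_{\min}(A_x)\ge\min_i a_{ii}$ is backwards --- for any symmetric matrix $\lambda_{\min}$ is \emph{at most} the smallest diagonal entry --- and the legitimate lower bound is the Gershgorin endpoint $\min_i\bigl(a_{ii}-R_i\bigr)$, which is what your final paragraph correctly identifies as requiring the off-diagonal mass to be dominated by the diagonal floor; the paper's proof makes the same unjustified leap from $\lambda_{\min}(A_x)\le (A_x)_{ii}$ to $\lambda_{\min}(A_x)\ge(1-\tfrac{2\log n}{m})^2\epsilon^2$, so this does not distinguish your argument from theirs, but it is the step you should tighten if you want a fully rigorous derivation.
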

\begin{proof}
    Since each column of $\nabla_\theta f(x)$ is uniformly from the ball $\mathcal{B}^{|\theta|}_\epsilon:\{x\in\mathbb R^{|\theta|},\|x\|_2\le \epsilon\}$, according to Lemma~\ref{prop:high_dim_2}, then with probability $1-O(1 / |\tilde{q}_{\mathcal{Y}|x}|)$ the following holds:
\begin{equation}\label{eq:bound_elememnts}
    \begin{aligned}
            &(A_x)_{ii}=\left\|(\nabla_\theta f(x))_i \right\|^2_2 \geq \left(1-\frac{2 \log |\tilde{q}_{\mathcal{Y}|x}|}{|\theta|}\right)^2\epsilon^2,\text{ for } i=1, \ldots, |h|,\\
            &|(A_x)_{ij}|=|\langle (\nabla_\theta f(x))_i, (\nabla_\theta f(x))_j\rangle| \leq \frac{\sqrt{6 \log |\tilde{q}_{\mathcal{Y}|x}|}}{\sqrt{|\theta|-1}} \epsilon^2\quad \text { for all } i, j=1, \ldots,|h|, i \neq j.
    \end{aligned}
    \end{equation}
Because $A_x=\nabla_\theta f(x)^\top \nabla_\theta f(x)$ is a symmetric positive semidefinite matrix, then we have $\lambda_{\min}(A_x)\le (A_x)_{ii}\le \lambda_{\max}(A_x)$.
Then the following holds with probability $1-O(1 / |\tilde{q}_{\mathcal{Y}|x}|)$ at least 
\begin{equation}\label{eq:ev_lower_bound}
\begin{aligned}
\lambda_{\min}(A_x)\geq (1-\frac{2 \log |\tilde{q}_{\mathcal{Y}|x}|}{|\theta|})^2\epsilon^2,\\
        \lambda_{\max}(A_x)\geq (1-\frac{2 \log |\tilde{q}_{\mathcal{Y}|x}|}{|\theta|})^2\epsilon^2.
\end{aligned}
\end{equation}

According to Lemma~\ref{lemma:gershgorin}, there exist $k,k'\in  \{1,\cdots,|\tilde{q}_{\mathcal{Y}|x}|\}$
\begin{equation}
    \begin{aligned}
        \lambda_{\max}(A_x)-(A_x)_{kk}&\le R_k,\\ 
        (A_x)_{k'k'}-\lambda_{\min}(A_x)&\le R_{k'},
    \end{aligned}
\end{equation}
where $R_k=\sum_{1\le j\le n,j\neq k}|(A_x)_{kj}|$, $R_{k'}=\sum_{1\le j\le n,j\neq k'}|(A_x)_{k'j}|$.
Based on the inequalities~\eqref{eq:bound_elememnts}, we obtain 
\begin{equation}\label{eq:bound_R}
    R_k+R_{k'}\le 2(|\tilde{q}_{\mathcal{Y}|x}|-1)\frac{\sqrt{6 \log |\tilde{q}_{\mathcal{Y}|x}|}}{\sqrt{|\theta|-1}}\epsilon^2.
\end{equation}

Since $(A_x)_{kk}\le \epsilon^2$ and $|\tilde{q}_{\mathcal{Y}|x}|\le |\theta|-1$, then we have
\begin{equation}
    \begin{aligned}
        \lambda_{\max}(A_x)-\lambda_{\min}(A_x)&\le R_k+R_{k'}+(A_x)_{kk}-(A_x)_{k'k'}\\
        &\le R_k+R_{k'}+\epsilon^2-(A_x)_{k'k'}\\
        &\le 2(|h|-1)\frac{\sqrt{6 \log |\tilde{q}_{\mathcal{Y}|x}|}}{\sqrt{|\theta|-1}}\epsilon^2+\epsilon^2-\left(1-\frac{2 \log |\tilde{q}_{\mathcal{Y}|x}|}{|\theta|}\right)^2\epsilon^2\\
        &\le 2(|\tilde{q}_{\mathcal{Y}|x}|-1)\epsilon^2\frac{\sqrt{6 \log |\tilde{q}_{\mathcal{Y}|x}|}}{\sqrt{|\theta|-1}}+\frac{4 \log |\tilde{q}_{\mathcal{Y}|x}|}{|\theta|}\epsilon^2\\
        &\le \frac{2|\tilde{q}_{\mathcal{Y}|x}|\epsilon^2\sqrt{6 \log |\tilde{q}_{\mathcal{Y}|x}|}}{\sqrt{|\theta|-1}}.
    \end{aligned}
\end{equation}
By dividing both sides of the aforementioned inequality by the corresponding sides of inequalities~\eqref{eq:ev_lower_bound}, we obtain:
\begin{equation}
    \begin{aligned}
   \frac{ \lambda_{\max}(A_x)}{\lambda_{\min}(A_x)}&\le Z(|\theta|,|\tilde{q}_{\mathcal{Y}|x}|)+1,
    \end{aligned}
\end{equation}
where $Z(|\theta|,|\tilde{q}_{\mathcal{Y}|x}|)=\frac{2|\tilde{q}_{\mathcal{Y}|x}|\sqrt{6 \log |\tilde{q}_{\mathcal{Y}|x}|}}{\sqrt{|\theta|-1}} (1-\frac{2 \log |\tilde{q}_{\mathcal{Y}|x}|}{|\theta|})^2$.
\end{proof}

According to this theorem, the strategies to reduce structural error involve two key aspects: enhancing gradient independence to satisfy the Gradient Independence Condition~\ref{def:grad_indep_con} and reducing the structural error by increasing the number of parameters $|\theta|$. 

Below, we analyze several deep learning techniques based on Theorem~\ref{thm:number_para}. 
Our examination includes insights derived from our designed experiments, which have validated several conclusions. It is important to note, however, that some of these conclusions are based on intuition and practical experience rather than rigorous theoretical derivation or comprehensive empirical validation. Despite this, we believe that presenting preliminary yet plausible analyses from a novel perspective can offer valuable insights and stimulate further discussion and research into these issues.

\paragraph{Enhancing Gradient Independence}
To achieve the Gradient Independence Condition, techniques that weaken inter-parameter correlations and promote independent gradient updates are essential. Some effective implementations include:
\begin{itemize}
    \item \textbf{Dropout}~\cite{Krizhevsky2012ImageNetCW}: By randomly deactivating neurons during training, dropout reduces co-adaptation among parameters, which in turn encourages more independent gradient updates. This process helps weaken the dependence among the columns of $\nabla_\theta f_\theta(x)$, thereby improving the rationality and applicability of the Gradient Independence Condition. 
    \item \textbf{Random parameter initialization}. Proper weight initialization methods (e.g., Xavier or He initialization) ensure that gradients are well-scaled and less correlated at the start of training, facilitating adherence to the Gradient Independence Condition.
    \item \textbf{Stochasticity in SGD}. Unlike gradient descent (GD), which computes gradients using the entire dataset and thus produces deterministic updates, SGD approximates the true gradient by computing it on a single sample or a small mini-batch randomly selected from the dataset. This introduces variability into the parameter updates at each iteration. From this perspective, SGD incorporates randomness through sampling, which helps to validate the Gradient Independence Condition.
\end{itemize}

\paragraph{Increasing the Number of Parameters}
The number of parameters in a model influences not only its fitting capability but also plays a crucial role in its non-convex optimization ability.
Theorem~\ref{thm:number_para} indicates that the structural error decreases as the number of parameters increases, provided that the Gradient Independence Condition is satisfied. This conclusion aligns with the pivotal viewpoint that over-parameterization plays a crucial role in the exceptional non-convex optimization and generalization capabilities of DNNs~\cite{Du2018GradientDP, Chizat2018OnLT, Arjevani2022AnnihilationOS}. Going further, this theorem provides theoretical insights into how over-parameterization influences a model's non-convex optimization capability. The experimental results in Figure~\ref{fig:lay_init_indicator} also validate this conclusion.

\section{Empirical Validations}
\label{sec:experiment}
In this section, we aim to validate the correctness of the core theoretical conclusions of this paper through experiments.  
For the sake of convenient description, we refer to $\log \mathcal{E}_f(\tilde{q}_{\mathcal{Y}|x},p_{\mathcal{Y}|x})$ as the fitting error, $\log \|\nabla_\theta d_\Omega(\tilde{q}_{\mathcal{Y}|x}, f_x)\|_2^2 + U(A_x)$ and $\log \|\nabla_\theta d_\Omega(\tilde{q}_{\mathcal{Y}|x}, f_x)\|_2^2 + L(A_x)$ as its upper and lower bounds, respectively. Additionally, we denote $\log_2 \|\nabla_\theta d_\Omega(\tilde{q}_{\mathcal{Y}|x}, f_x)\|_2^2$ simply as the gradient norm. 

\begin{figure}[ht]
\begin{center}
\centerline{\includegraphics[width=\columnwidth]{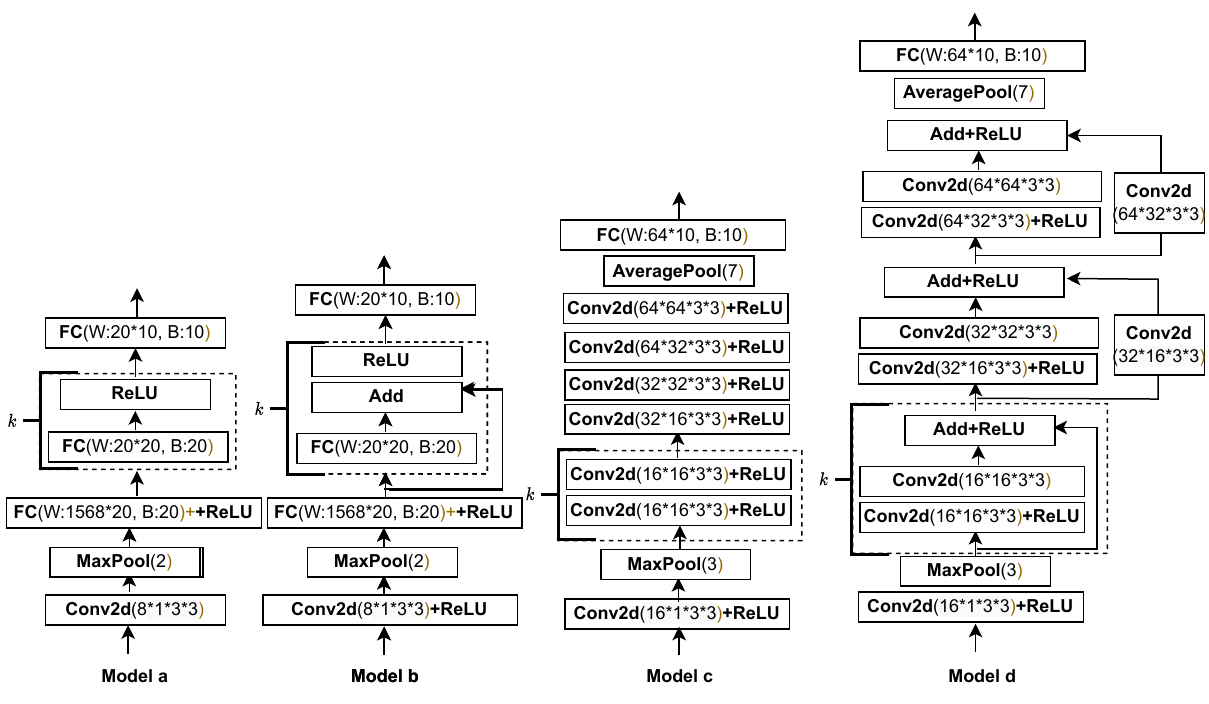}}
\caption{Model architectures and configuration parameters.}
\label{fig:model_increase}
\end{center}
\vskip -0.2in
\end{figure}
\subsection{Experimental Design}
Since the focus of this experiment is on evaluating whether models with different architectures align with the theoretical conclusions presented in this paper during the training process, rather than focusing on the final specific performance metrics or comparisons with existing algorithms, we use the \texttt{MNIST} dataset~\cite{LeCun1998GradientbasedLA} as a case study and design model architectures according to the conclusions to be verified. 

Model architectures and configuration parameters are illustrated in Figure~\ref{fig:model_increase}. Models b and d extend Models a and c by incorporating skip connections, with $k$ denoting the number of blocks (or layers), allowing for adjustable model depth.
The experiments were executed using the following computing environment: Python 3.7, Pytorch 2.2.2, and a GeForceRTX2080Ti GPU. 
The training parameters are as follows: the loss function is softmax cross-entropy, the number of epochs is 1, the batch size is 64, and the models were trained using the SGD algorithm with a learning rate of 0.01 and momentum of 0.9. 
In the experiments, we control the parameter number by adjusting the model depth.

\subsection{Experimental Result on Optimization Mechanism}
We assign a value of $k = 1$ to the models in Figure~\ref{fig:model_increase} and train them. 
The changes in gradient norm and structural error throughout the training process are illustrated in Figure~\ref{fig:convergence_indicator}. The fitting errors of the models, along with their respective bounds, are depicted in Figure~\ref{fig:convergence_bound}. To quantify the changes in correlations among these metrics as training progresses, we introduce the local Pearson correlation coefficient as a metric. Specifically, we apply a sliding window approach to compute the Pearson correlation coefficients between the given variables within each window. We use a sliding window of length 50 to compute the Pearson correlation coefficients between the fitting error and its upper and lower bounds. During training, the local Pearson correlation coefficients are illustrated in Figure~\ref{fig:model_correlation}.
\begin{figure}[ht]
	\centering
	\begin{minipage}{1.0\linewidth}
		\centering
            \centerline{\includegraphics[width=\columnwidth]{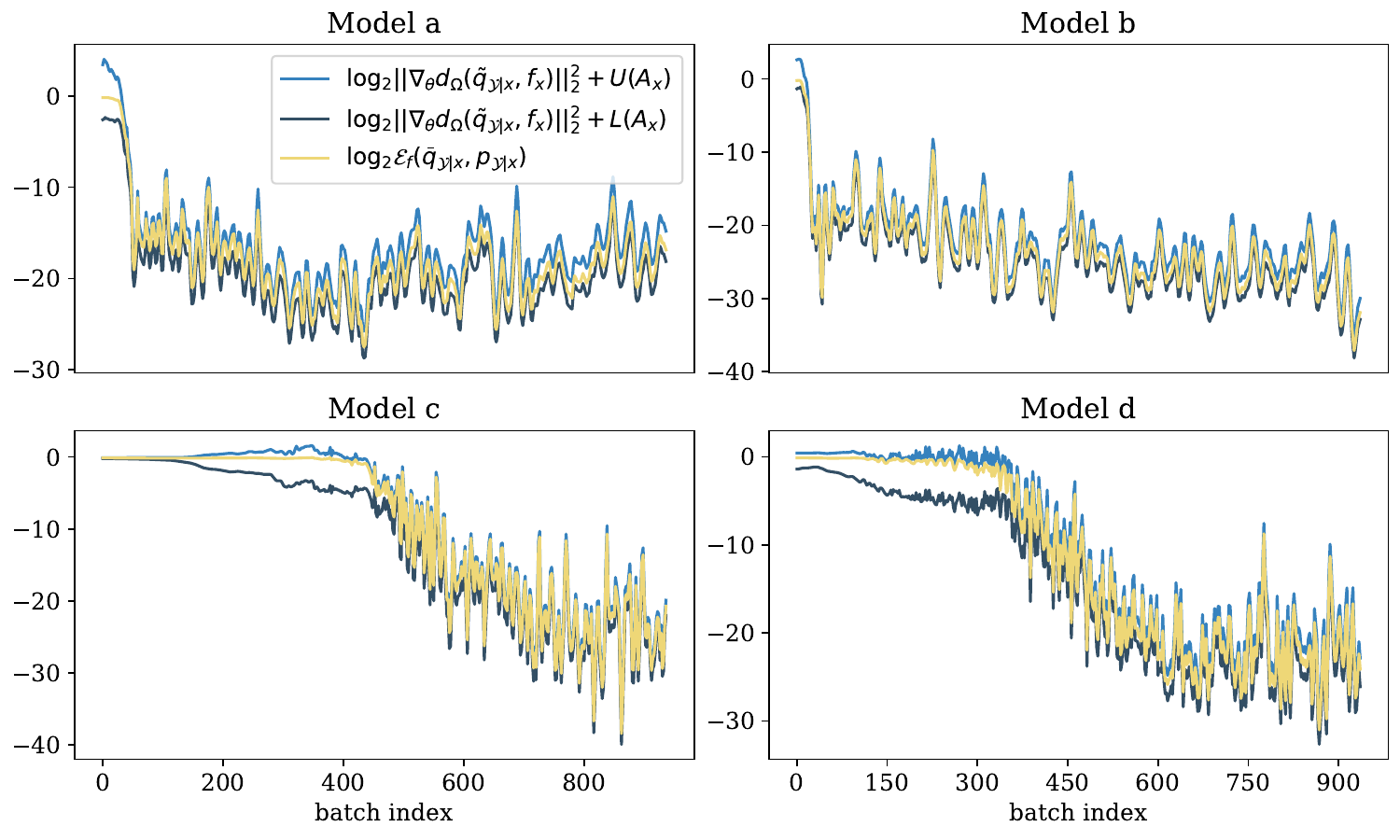}}
            \caption{Changes of bounds during the training process.}
            \label{fig:convergence_bound}
            \vspace{5mm} 
	\end{minipage}
	\begin{minipage}{1.0\linewidth}
		\centering
            \centerline{\includegraphics[width=1.0\columnwidth]{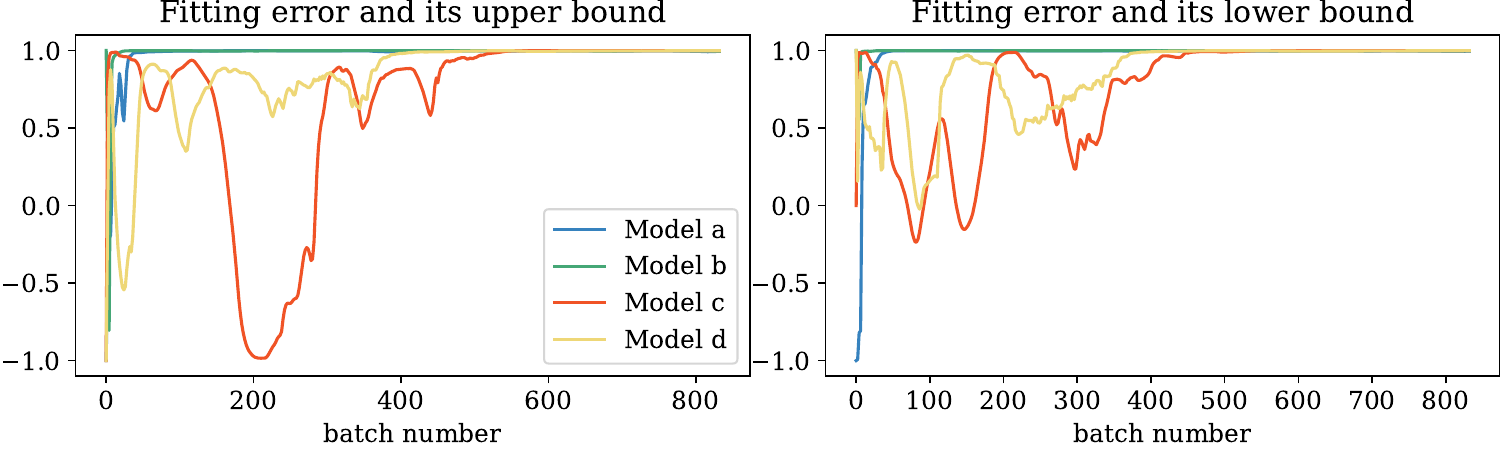}}
            \caption{Local Pearson correlation coefficient curves during the training process.}
            \label{fig:model_correlation}
	\end{minipage}
\vskip -0.1in
\end{figure}
As shown in Figure~\ref{fig:convergence_bound}, as training progresses, the variation in fitting error increasingly aligns with the trends of its upper and lower bounds. Concurrently, the gap between the bounds narrows. After the 500th epoch, the Pearson correlation coefficients between $\log \mathcal{E}_f(\tilde{q}_{\mathcal{Y}|x},p_{\mathcal{Y}|x})$ and its upper and lower bounds approach 1 across all models in Figure~\ref{fig:model_correlation}. These observations validate the Theorem~\ref{thm:station_error}, demonstrating that minimizing the fitting error is effectively achieved by controlling its upper and lower bounds in model training. 
\begin{figure}[ht]
\centering
    \centerline{\includegraphics[width=1.0\linewidth]{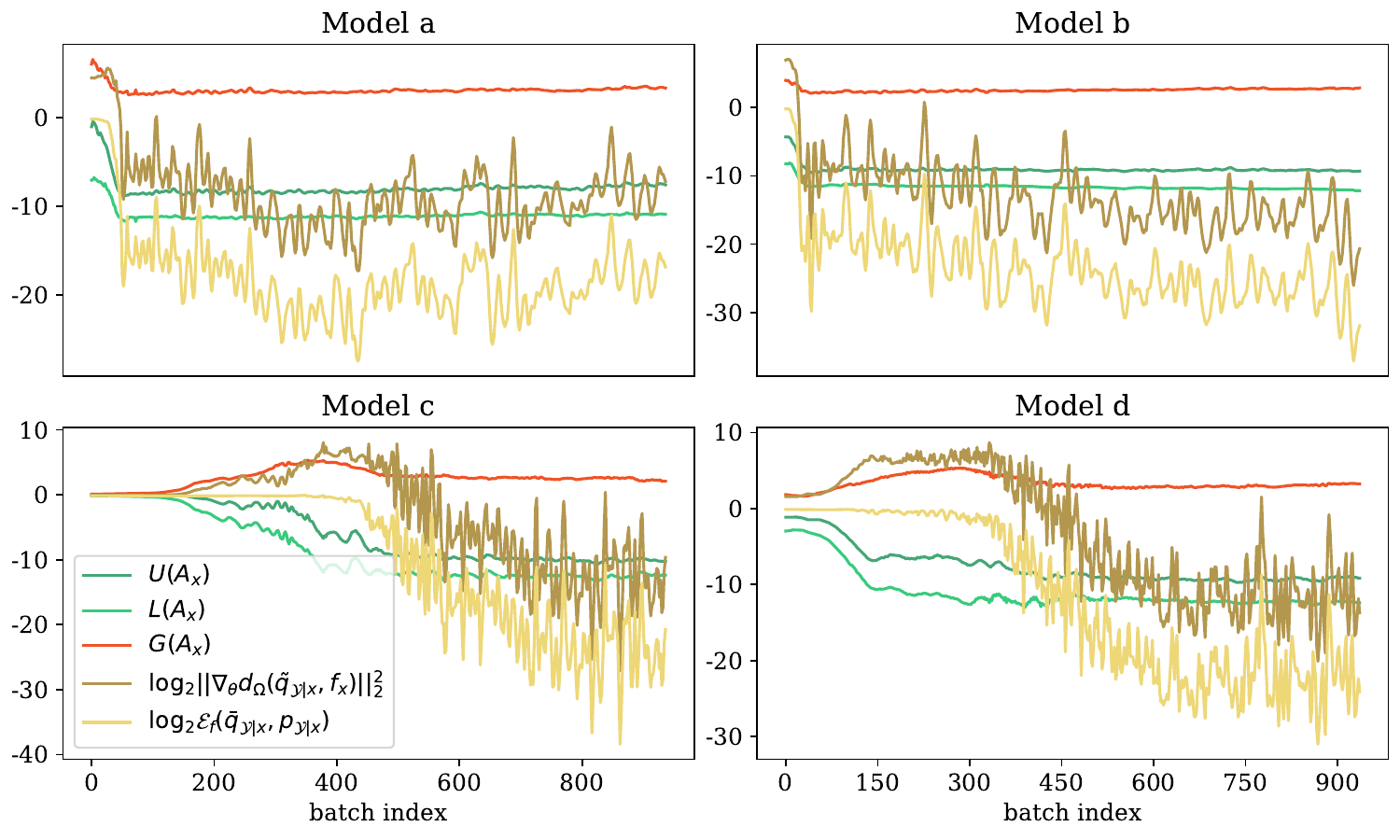}}
    \caption{Changes in indicators during the training process.}
    \label{fig:convergence_indicator}
\end{figure}
We further analyze the changes in structural error and gradient norm during the training process, as illustrated in Figure~\ref{fig:convergence_indicator}. 
Theorem~\ref{thm:station_error} implies that during model training, the gradient norm can control the fitting error only when the structural error remains constant. This conclusion is fully corroborated by the experimental results illustrated in Figure~\ref{fig:convergence_indicator}.

\subsection{Experimental Result on the Structural Error}  

\subsubsection{Initialization}
\begin{figure}[ht]  
\centering  
\centerline{\includegraphics[width=\columnwidth]{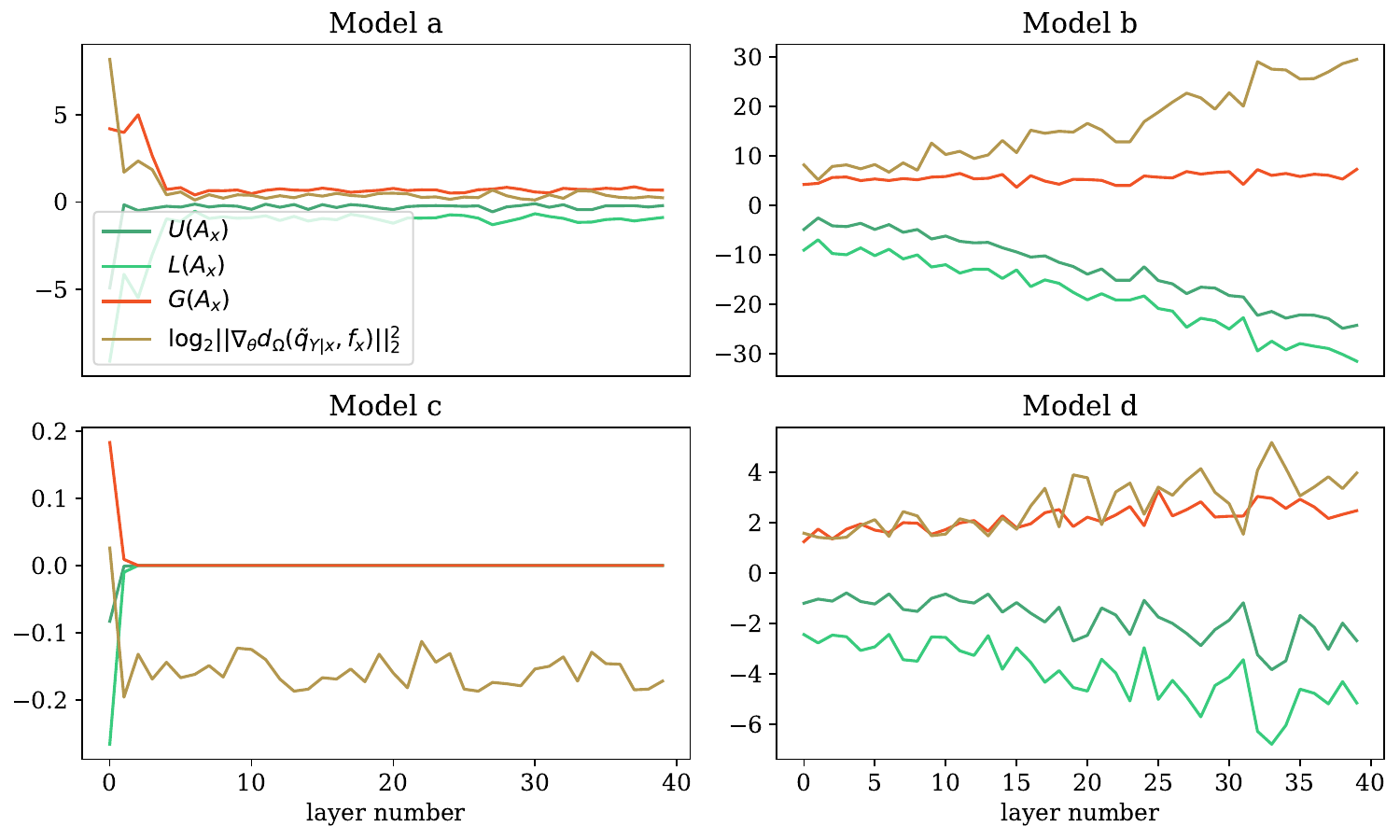}}  
\caption{Changes in indicators during the increase in model depth.}  
\label{fig:lay_init_indicator}  
\vskip -0.1in  
\end{figure}
We then examine the relationship between structural error, the number of parameters, and model structure (specifically, the use of skip connections) during the random initialization phase. 
The theoretical results in subsection~\ref{subsec:structural_error} suggest two key points:
\begin{enumerate}
\item Introducing skip connections increases the eigenvalues of the structural matrix, which in turn reduces the structural error.
\item During this phase, if skip connections are not used, the gradient independence condition holds true, implying that models with more parameters will have lower structural errors.
\end{enumerate}
We increase the depth of the models under the default initialization methods (He initialization) and observe changes in structural error. The results of this experiment are illustrated in Figure~\ref{fig:lay_init_indicator}.  

For Models a and c, which do not incorporate skip connections, as the number of layers increases, $U(A_x)$, $L(A_x)$ and $G(A_x)$ rapidly decrease, rapidly decrease, causing the structural error to decline and approach zero. This experimental result is fully consistent with the conclusions of~\ref{thm:number_para}. 
As shown in Figure~\ref{fig:lay_init_indicator}, skip connections in Model b and d have two key aspects: one is to amplify gradient values to prevent vanishing gradients, and the other is that the presence of skip connections can invalidate the Gradient Independence Condition. 
Therefore, both $U(A_x)$ and $L(A_x)$ decrease, while $G(A_x)$ increases. Given that $U(A_x) \gg L(A_x)$, the structural error is primarily determined by $U(A_x)$, leading to a reduction in structural error. 
According to ~\ref{thm:station_error}, a decrease in $U(A_x)$ facilitates the optimization of fitting error through the gradient norm. Consequently, the conclusions of this paper provide a new insight into understanding the role of skip connections in the optimization mechanisms of deep learning.

\subsubsection{Training Dynamics}
We increase the value of $ k $ from 0 to 5 in Models a and b and train these models to investigate the impact of parameter number on training dynamics. 
The experimental results are illustrated in Figure~\ref{fig:layer_nores_bound} and Figure~\ref{fig:layer_res_bound}. 
To examine the correlation between the fitting error and its bounds, we also computed the local Pearson correlation coefficients using a sliding window of length 50. The results are shown in Figure~\ref{fig:nores_layer_correlation} and Figure~\ref{fig:res_layer_correlation}.

\begin{figure}[htbp]
	\centering
	\begin{minipage}{1.0\linewidth}
		\centering
		\includegraphics[width=1.0\columnwidth]{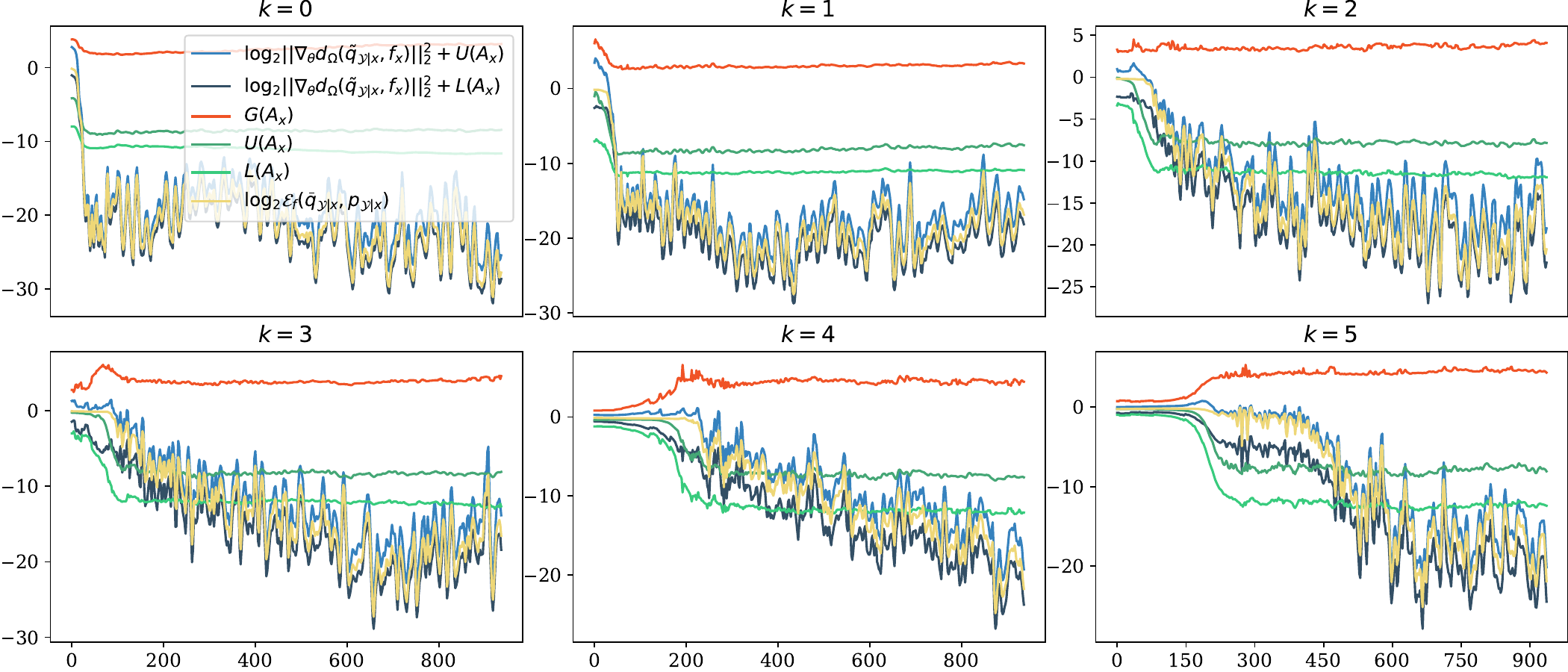}
		\caption{Bounds of model a with increasing blocks.}
		\label{fig:layer_nores_bound}
	\end{minipage}

	\begin{minipage}{1.0\linewidth}
		\centering
		\includegraphics[width=1.0\columnwidth]{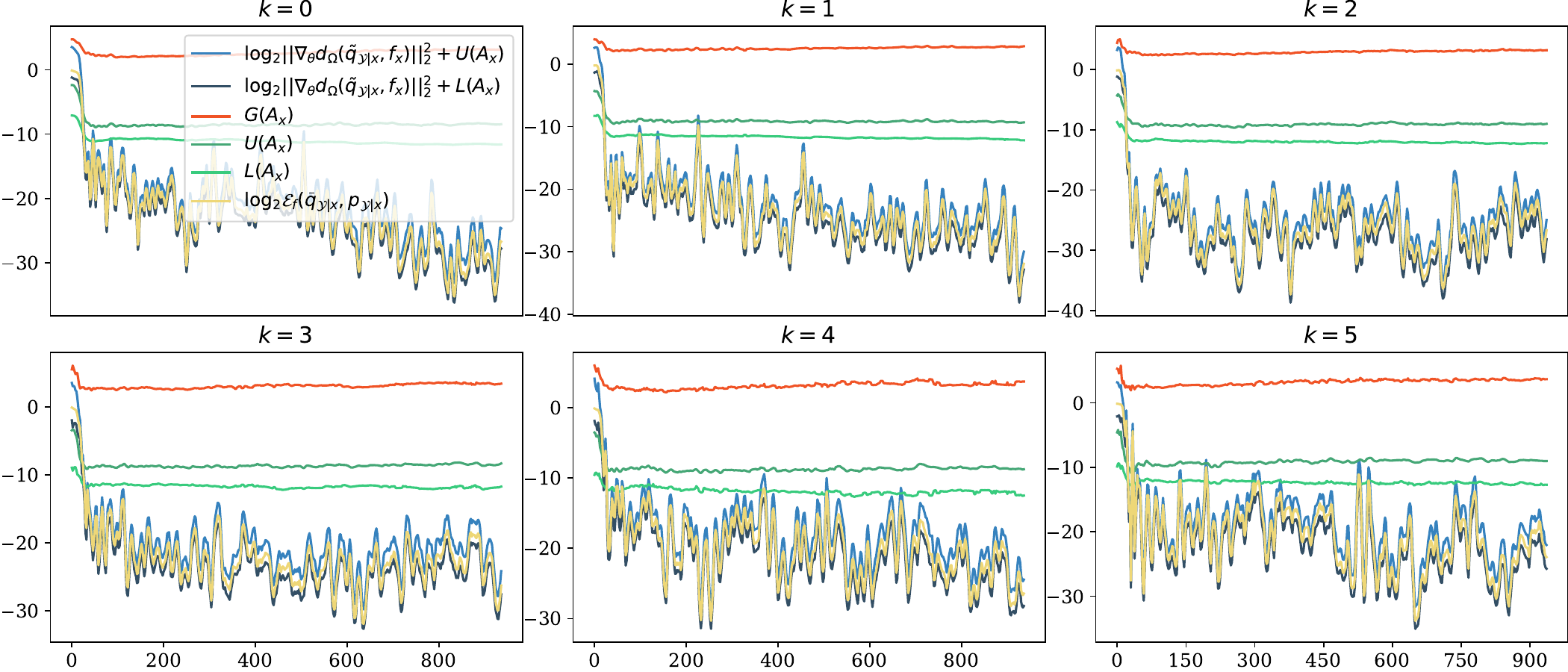}
		\caption{Bounds of model b with increasing blocks.}
		\label{fig:layer_res_bound}
	\end{minipage}
\end{figure}

\setlength{\parskip}{0.2cm plus4mm minus3mm}
\begin{figure}[htbp]
	\centering
	\begin{minipage}{0.9\linewidth}
		\centering
		\includegraphics[width=1.0\columnwidth]{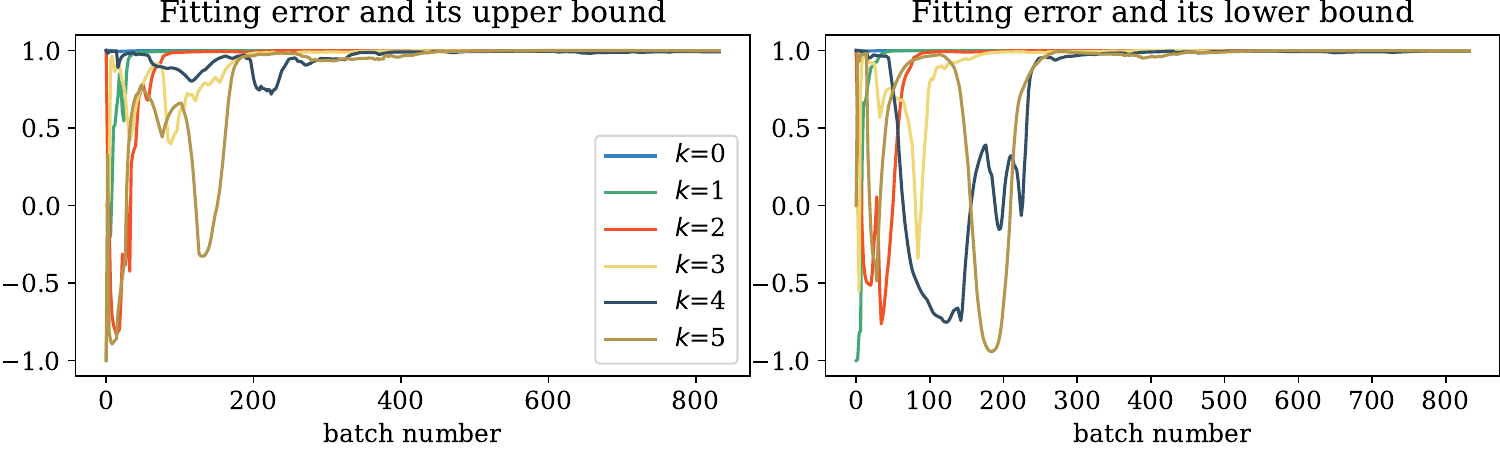}
		\caption{Local Pearson correlation coefficient curves of model a with increasing blocks.}
		\label{fig:nores_layer_correlation}
	\end{minipage}
	\begin{minipage}{0.9\linewidth}
		\centering
		\includegraphics[width=1.0\columnwidth]{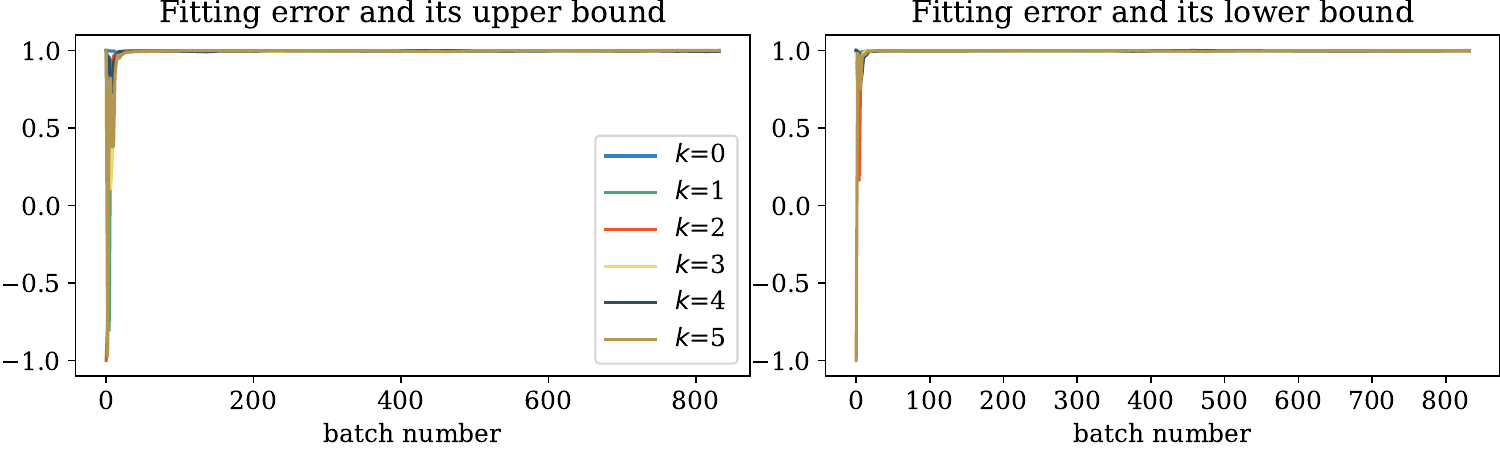}
		\caption{Local Pearson correlation coefficient curves of model b with increasing blocks.}
		\label{fig:res_layer_correlation}
	\end{minipage}
\end{figure}
Below, we present the analysis of the experimental results and the conclusions drawn from them.
\begin{itemize}
\item \textbf{In the absence of skip connections, increasing parameter number prolongs the time required for structural error to converge, thereby delaying the onset of fitting error reduction.} As illustrated in Figure~\ref{fig:layer_nores_bound}, as the number of layers $k$ increases, the timing at which the fitting error of Model a begins to decrease corresponds to the point when structural error converges. Increasing parameter number prolongs the time required for structural error to converge, thereby delaying the onset of fitting error reduction. This observation aligns with Theorem~\ref{thm:station_error}, which implies that the optimization of fitting error depends on controlling its upper and lower bounds. 
\item \textbf{In the absence of skip connections, increasing parameter number progressively fulfills the Gradient Independence Condition.} As illustrated in Figure~\ref{fig:layer_nores_bound}, as the number of layers $k$ increases, $U(A_x)$, $L(A_x)$, and $G(A_x)$ all approach zero, indicating that the structural error also tends toward zero. This phenomenon corroborates Theorem~\ref{thm:number_para}.
\item \textbf{Skip connections disrupt the initial Gradient Independence Condition but can accelerate the convergence of structural error.} At the onset of training, even with an increase in the number of layers, Model b does not exhibit a reduction in structural error. We attribute this primarily to the fact that skip connections disrupt the Gradient Independence Condition. Consequently, Theorem~\ref{thm:number_para} is no longer applicable. However, the presence of skip connections addresses the vanishing gradient problem and reduces the structural error, thereby accelerating the model's convergence as illustrated in Figure~\ref{fig:nores_layer_correlation} and Figure~\ref{fig:res_layer_correlation}. 
    
\end{itemize}

\section{Conclusion}
\label{sec:conclusion}
In conclusion, this study analyzes the optimization mechanisms of deep learning from the perspective of conditional distribution estimation and fitting. To ensure the generality of the results, the Fenchel-Young loss is utilized, which encompasses virtually all loss functions used in practical applications. This paper demonstrates that the non-convex optimization problems associated with training deep models can be addressed by minimizing the gradient norm and structural error. It shows that under typical configurations in deep learning, global optima can be approximated by their stationary points. Furthermore, the paper elucidates the influence mechanism of model parameter size on the optimization outcomes of deep learning under the assumption of gradient independence, providing theoretical insights into techniques such as over-parameterization and random parameter initialization.
In summary, this paper offers a novel theoretical perspective for understanding the optimization mechanisms in deep learning and validates its effectiveness through experiments. Additionally, while analyzing the impact of model size on the non-convex optimization capabilities, this study employs a relatively stringent assumption of gradient independence. Consequently, although the conclusions drawn based on this assumption provide valuable theoretical insights, their accuracy requires further examination. Exploring other more relaxed assumptions could be one direction for future work in this area.

\backmatter

\bmhead{Acknowledgements}

This work was supported in part by the National Key Research and Development Program of China under No. 2022YFA1004700, Shanghai Municipal Science and Technology, China Major Project under grant 2021SHZDZX0100, the National Natural Science Foundation of China under Grant Nos.~62403360, 72171172, 92367101, 62088101, iF open-funds from Xinghuo Eco and China Institute of Communications.

\bmhead{Supplementary information}




\section*{Declarations}

\subsection{Funding}
This work was supported in part by the National Key Research and Development Program of China under No. 2022YFA1004700, Shanghai Municipal Science and Technology, China Major Project under grant 2021SHZDZX0100, the National Natural Science Foundation of China under Grant Nos.~62403360, 72171172, 92367101, 62088101, iF open-funds from Xinghuo Eco and China Institute of Communications.

\subsection{Competing interests}
The authors declare that they have no known competing financial interests or personal relationships that could have appeared to influence the work reported in this paper.

\subsection{Ethics approval and consent to participate}
Not applicable.

\subsection{Data availability}
The datasets generated and/or analyzed in the present study are accessible via the GitHub repository at \href{https://yann.lecun.com/exdb/mnist/}{https://github.com/cvdfoundation/mnist} and the website at \href{https://yann.lecun.com/exdb/mnist/}{https://yann.lecun.com/exdb/mnist/}.

\subsection{Materials availability}
Not applicable.

\subsection{Code availability}
Code for preprocessing and analysis is provided if there is a clear need.

\subsection{Author contribution}
Binchuan Qi: Conceptualization, Methodology, Writing original draft. 
Wei Gong: Supervision, Review, Declarations. 
Li Li: Supervision, Review, Declarations.

\noindent

\begin{appendices}

\end{appendices}


\bibliography{sn-bibliography}

\end{document}